\theoremstyle{thmstyleone}%
\newtheorem{theorem}{Theorem}
\newtheorem{proposition}[theorem]{Proposition}%
\newtheorem{lemma}[theorem]{Lemma}%
\newtheorem{corollary}[theorem]{Corollary}%
\def\mc#1{\mathcal{#1}}
\def\mb#1{\mathbf{#1}}
\def\diag{\mathrm{diag}}
\def\mbb#1{\mathbb{#1}}
\begin{document}

\title[Stability and Generalization of Hypergraph Collaborative Networks]{Stability and Generalization of Hypergraph Collaborative Networks}


\author[1,2]{\fnm{Michael K.} \sur{Ng}}

\author[3]{\fnm{Hanrui} \sur{Wu}}

\author[2]{\fnm{Andy} \sur{Yip}}

\affil[1]{\orgdiv{Institute of Data Science}, \orgname{The University of Hong Kong}, \orgaddress{\street{Pokfulam Road}, \city{Hong Kong},  \country{China}}}

\affil[2]{\orgdiv{Department of Mathematics}, \orgname{The University of Hong Kong}, \orgaddress{\street{Pokfulam Road}, \city{Hong Kong},  \country{China}}}

\affil[3]{\orgdiv{College of Information Science and Technology},
\orgname{Jinan University}, \orgaddress{\city{Guangzhou}, \country{China}, \postcode{510006}}}


\abstract{Graph neural networks have been shown to be very effective in utilizing pairwise relationships across samples.
Recently, there have been several successful proposals to generalize graph neural networks to hypergraph neural networks
to exploit more complex relationships. In particular, the hypergraph collaborative networks
yield superior results compared to other hypergraph neural networks for various semi-supervised learning tasks.
The collaborative network can provide high quality vertex embeddings and hyperedge embeddings
together by formulating them as a joint optimization
problem and by using their consistency in reconstructing the given hypergraph. In this paper, we aim
to establish the algorithmic stability of the core layer of the collaborative network
and provide generalization guarantees.
The analysis sheds light on the design of hypergraph filters in collaborative networks,
for instance, how the data and hypergraph filters
should be scaled to achieve uniform stability of the learning process.
Some experimental results on real-world datasets are presented to illustrate the theory.
}

\keywords{Hypergraphs, vertices, hyperedges, collaborative networks,
graph convolutional neural networks, stability, generalization guarantees.}



\maketitle

\section{Introduction}\label{sec:intro}

Many real-world applications involve datasets that exhibit graph structures that depict pairwise relationships between
vertices. These applications span a wide range of domains, including text analysis \cite{yao-mao-luo18,xu-etal:20},
social network analysis \cite{chen-li-bruna20}, molecule classification in chemistry \cite{gilmer-etal17},
point cloud processing \cite{te-etal18}, mesh generation \cite{litany-etal18}, and knowledge graphs \cite{rui-etal:22}, to name a few.
In some applications, the only available data are graphs, while in some other applications, features about the
vertices and edges are also available. Very naturally, machine learning algorithms that can exploit both graphs and
features often yield better results than algorithms that solely rely on graph structures.
Of particular interest is the class of graph convolution network (GCN) methods \cite{kipf-welling17, li-etal18, such-etal17}.
These methods use neural network layers whose output at a vertex
depends mostly on others that are deemed relevant according to the graph structure. A very common point of view is that
GCN is a generalization of the traditional spatial filters, where graphs can represent neighbors that are not limited to
spatial closeness.

While graphs can depict pairwise relationships, hypergraphs can represent relationships between multiple vertices.
A hypergraph is a graph in which each edge, or hyperedge, can connect to multiple vertices. Hypergraphs can therefore
represent more complex relationships. For example, in traditional paper-authorship networks, two articles (vertices) are
connected if they share one or more coauthors. In this way, authorship information, which can provide important clues
to the topic of the articles, is lost. Hypergraphs come to the rescue by treating each author as a hyperedge and each article as a vertex.
Several works have been devoted to hypergraph learning. In \cite{zhou-huang-scholkopf06},
spectral clustering and semi-supervised parametric models on graphs were extended to hypergraphs.
In \cite{gao-etal20}, a tensor representation of hypergraphs was proposed to make the optimization of hypergraphs more amenable.
Generalizations of graph convolution networks to hypergraph convolution networks have been studied in
\cite{feng-etal19, bai-zhang-torr21, dong-sawin-bengio20, yadati-etal19}.
The main task is to define
a convolution operator in a hypergraph such that
the transition probability between two vertices can be measured, and
the embeddings (or features) of each vertex can be propagated in a
hypergraph convolution network. More propagations should
be done between vertices connected by a common hyperedge.

Recently, \cite{wu-ng22} and \cite{wu-yan-ng22} studied and developed
the convolution of vertex and hyperedge features
to suitably aggregate values in hypergraphs. It was shown
that such hypergraph collaborative networks (HCoN)
have obtained superior results in some semi-supervised learning problems compared to
other hypergraph neural network models.
The proposal is to formulate the learning of vertex and hyperedge embeddings as
a joint optimization problem to allow for updating the vertex and hyperedge embeddings simultaneously.
The authors showed that the performance can be further boosted by incorporating a hypergraph reconstruction error as one of the objectives.

Apart from the development of machine learning algorithms and applications, efforts on the theory
have also been made in the past decades, from general learning theory to specific properties of a class of algorithms.
In this paper, we aim to study the algorithmic stability and generalization of the class of HCoN. We show that,
in the single-layer case, the HCoN is algorithmically stable. As a result, the generalization
gap (difference between training error and testing error) converges to zero as the size of the training set increases.
We should explain the notions of training and testing in our context in Section \ref{sec:sgd} below.
The analysis sheds light on the design of hypergraph filters in HCoNs, for instance, how the data and filters
should be scaled to achieve the uniform stability of the learning process. Our
generalization result is also valid for hypergraph convolution networks with
the propagation of vertex embeddings only.

The rest of the paper is organized as follows. In Section 2, we review some works
related to hypergraph neural networks
and generalization guarantee studies. In Section 3, we introduce the HCoN model, some assumptions about
the loss function and activation function, and some preliminary results.
The results of the algorithmic stability of the HCoN and the generalization guarantee are established in Section 4.
In Section 5, we present some experimental studies on the generalization gap to illustrate the theory.
Some concluding remarks are given in Section 6.

\section{Related Work}

Generalization guarantees concern the expected difference between training and testing errors.
Bousquet and Elisseeff \cite{bousquet-elisseeff02} and Mukherjee {\it et~al.} \cite{mukherjee-etal06} showed that
under suitable regularity assumptions, the stability of an algorithm implies generalization.
In addition to the general theory, Bousquet and Elisseeff \cite{bousquet-elisseeff02} also studied the stability and generalization
of the \emph{global minimizer} of some regularized learning models. Such results are therefore independent of the particular
algorithm used to compute an approximate optimizer.
Stochastic gradient descent (SGD) is a popular algorithm to obtain
a suboptimal solution to optimization problems. Some classical results on the generalization of SGD in the case of
a single pass of data are reported in \cite{nemirovski-yudin83}. The analysis in the case of multiple passes
of SGD is reported in \cite{hardt-recht-singer15}.

Stability and generalization analysis of SGD applied to graph convolution networks (GCNs) is studied in \cite{verma-zhang19}.
The stability is established in terms of the dominant eigenvalue of the graph filter.
A difference between \cite{verma-zhang19} and \cite{bousquet-elisseeff02} is that the former considers
the model parameters computed via SGD, whereas the latter considers the theoretical optimal model parameters.
Our work generalizes the analysis of GCNs in \cite{verma-zhang19} to hypergraph collaborative networks \cite{wu-ng22, wu-yan-ng22}
which also include some edge features. A technical difference between our work and \cite{verma-zhang19} is
that the latter assumes some Lipschitz conditions on the composition of the loss function and the neural network outputs
but we require some Lipschtiz conditions on the loss function only. Ours are therefore easier to verify and more fundamental.

Various kinds of stability have been proposed and studied in the literature \cite{bousquet-elisseeff02, mukherjee-etal06}.
Similar to \cite{verma-zhang19}, we use uniform stability which yields tighter bounds than other forms of stability,
such as error stability, hypothesis stability and pointwise hypothesis stability \cite{bousquet-elisseeff02}.
Finally, another related work is \cite{belkin-matveeva-niyogi04}, which studied regularized graph learning problems
and devised some generalization guarantees. However, the resulting generalization gap is inversely proportional to the
second smallest eigenvalue of the graph Laplacian matrix. Thus the bound can grow with the size of the graph.
The bounds devised by us and \cite{verma-zhang19} are independent of the graph size, and the generalization gap
converges to zero with the sample size and the graph size.

\section{Hypergraph Convolution Networks}

\subsection{Hypergraphs}

A hypergraph is a graph where a hyperedge can connect to any number of vertices.
In the undirected version, each hyperedge is represented by a subset of vertices. A hypergraph is denoted
by $\mc{G}=(\mc{V}, \mc{E})$ where $\mc{V}$ is the set of vertices and $\mc{E}=\{e\;:\;e\subset \mc{V}\}$ is
the set of hyperedges.
The number of vertices and the number of hyperedges are denoted by $N=\vert\mc{V}\vert$
and $M=\rvert\mc{E}\rvert$, respectively.
A hypergraph can also be represented by an incidence matrix $H\in\mathbb{R}^{N\times M}$, whose $(i,j)$-th
equals $1$ if the $j$-th hyperedge is connected to the $i$-th vertex, and equals $0$ otherwise.
For simplicity, we consider unweighted graphs, but the weights can be introduced into the hypergraph filters
easily as done in \cite{feng-etal19, bai-zhang-torr21, dong-sawin-bengio20, yadati-etal19}.

Generalizations of graph convolution networks to hypergraph convolution networks have been studied.
Given a feature matrix $X_{V}\in\mathbb{R}^{N\times F_V}$
(where $F_V$ is the dimension of the vertex features) and the dimension $O$ of the output embeddings,
the convolution operator in a hypergraph for the propagation of
vertex features (or embeddings) is constructed as follows:
\begin{equation} \label{vertexiterative}
f(X_{V} \vert Q_{V} ) =
\sigma\left(D_{V}^{-\frac{1}{2}} H D_{E}^{-1}  H^{\top}
D_{V}^{-\frac{1}{2}} X_{V} Q_{V} \right),
\end{equation}
where
$\sigma(\cdot)$ is an activation function,
$Q_{V}\in\mathbb{R}^{F_V\times O}$ is a matrix of learnable parameters,
$H\in\mathbb{R}^{N\times M}$ is the incident matrix of the hypergraph, and
$D_{V} = \diag(H\mb{1})$ and $D_{E} = \diag(H^{\top}\mb{1})$ are
diagonal matrices with the degrees of the vertices and the degrees of the hyperedges on the diagonals.
We note in \eqref{vertexiterative} that the operator
$D_{V}^{-\frac{1}{2}} H D_{E}^{-1}  H^{\top} D_{V}^{-\frac{1}{2}}$ mixes the feature vectors
of vertices connected by a common hyperedge. In this way, feature vectors of neighboring vertices of a vertex
are propagated into the vertex and aggregated to form a new feature vector for the vertex.
This model does not make use of hyperedge features.

\subsection{The Hypergraph Collaborative Network Model}\label{sec:hcon}

Unlike the hypergraph convolution networks in \eqref{vertexiterative},
the hypergraph collaborative networks (HCoN) in \cite{wu-ng22} and \cite{wu-yan-ng22}
aim to predict vertex-level and hyperedge-level labels together
based on the hypergraph structure and a set of vertex features and hyperedge features.
The single-layer hypergraph collaborative network we consider is given by
\setlength{\arraycolsep}{0.1em}
\begin{eqnarray}
f(X_{V}, X_{E} \vert \theta) &=& \sigma\left(\tilde{H} \tilde{H}^{\top} X_{V} Q_{V}
+ \tilde{H}D_{E}^{-\frac{1}{2}}X_{E}Q_{E}\right) \label{eqn:f}\\ 
g(X_{V}, X_{E} \vert \theta) &=& \sigma\left(\tilde{H}^{\top} \tilde{H} X_{E} P_{E}
+ \tilde{H}^{\top}D_{V}^{-\frac{1}{2}}X_{V}P_{V}\right). \label{eqn:g} 
\end{eqnarray}
\setlength{\arraycolsep}{5pt}%
In addition to the notations in \eqref{vertexiterative},
here $X_{E}\in\mathbb{R}^{M\times F_E}$ is a given matrix of hyperedge features,
$F_E$ is the dimension of the hyperedge features,
$P_{V}\in\mathbb{R}^{F_V\times O}$,
$Q_{E}\in\mathbb{R}^{F_E\times O}$ and
$P_{E}\in\mathbb{R}^{F_E\times O}$ are the additional learnable parameters,
$\theta=(Q_{V},Q_{E},P_{V},P_{E})$, and
$\tilde{H}=D_{V}^{-\frac{1}{2}} H D_{E}^{-\frac{1}{2}}$ is the normalized incident matrix.
The functions $f$ and $g$ are referred to as the vertex encoder and the hyperedge encoder, respectively.
They are used to produce labels at vertices and hyperedges, respectively.
In the sequel, we focus on the analysis of vertex encoder \eqref{eqn:f}
because the analysis of \eqref{eqn:g} is essentially the same.

We remark that the HCoN model described in \cite{wu-yan-ng22} reads
\setlength{\arraycolsep}{0.1em}
\begin{eqnarray}
f(X_{V}, X_{E} \vert \theta) &=& \sigma\left(\alpha\tilde{H} W\tilde{H}^{\top} X_{V} Q_{V}
+ (1-\alpha)\tilde{H} WD_{E}^{-\frac{1}{2}}X_{E}Q_{E}\right) \label{eqn:f_weighted}\\
g(X_{V}, X_{E} \vert \theta) &=& \sigma\left(\beta\tilde{H}^{\top} U\tilde{H} X_{E} P_{E} +
(1-\beta)\tilde{H}^{\top} UD_{V}^{-\frac{1}{2}}X_{V}P_{V}\right). \label{eqn:g_weighted}
\end{eqnarray}
\setlength{\arraycolsep}{5pt}%
In this work, we assume that the vertex weights $W$ and hyperedge weights $U$ are set to identity matrices for ease of presentation.
We also absorb the constants $\alpha$ and $\beta$ into the parameters $Q_{V}$, $Q_{E}$, $P_{V}$, and $P_{E}$; the degree of freedom
of the model remains unchanged.

\subsection{The Activation Function}

Algorithmic stability concerns the change in the loss function value with respect to the change in the data.
Since SGD aggregates gradients, it is very natural that the stability must rely on the regularity of the
activation function and the loss function.
The activation function $\sigma:\mathbb{R}\to\mathbb{R}$ is assumed to satisfy the following.
Standard activations such as sigmoid, ELU, and tanh verify these assumptions.
RELU fails the $\sigma$-smoothness. However, one can consider a smoothed RELU to restore the theory:
$\sigma(x)=0$ if $x<-\epsilon$, $(x+\epsilon)^2/(2\epsilon)^2$ if $-\epsilon \le x \le \epsilon$, $x$ if $x>\epsilon$.
\begin{enumerate}
\item $\sigma$ continuous differentiable

\item $\sigma$-Lipschitz:
\begin{equation}\label{eqn:sigma}
 \vert \sigma(x) - \sigma(y) \vert  \le \alpha_{\sigma} \vert x-y \vert  \qquad\mbox{for all }x,y\in\mathbb{R}.
\end{equation}
It follows that the derivative $\sigma'$ is bounded, i.e. $ \vert \sigma'(x) \vert  \le \alpha_{\sigma}$.

\item $\sigma$-smooth:
\begin{equation}\label{eqn:sigma_prime}
 \vert \sigma'(x) - \sigma'(y) \vert  \le \nu_{\sigma} \vert x-y \vert  \qquad\mbox{for all }x,y\in\mathbb{R}.
\end{equation}
\end{enumerate}

\subsection{The Loss Function}

Let $\hat{y}$ be an estimated label and let $y$ be the true label. The loss function
$\ell:[y_{\min}, y_{\max}]\times[y_{\min}, y_{\max}]\to\mathbb{R}^+$ is denoted by $\ell(\hat{y}, y)$.
The following assumptions are made.
\begin{enumerate}

\item Continuous w.r.t. $(\hat{y},y)$ and continuously differentiable w.r.t. $\hat{y}$

\item $\ell$-Lipschitz w.r.t. $\hat{y}$:
\begin{equation}\label{eqn:loss}
\left \vert \ell(\hat{y}, y) - \ell(\hat{y}', y)\right \vert  \le \alpha_{\ell} \vert \hat{y} - \hat{y}' \vert
\end{equation}
for all $\hat{y},\hat{y}',y\in[y_{\min}, y_{\max}]$.
This implies that $ \vert \frac{\partial \ell}{\partial \hat{y}}(\hat{y}, y) \vert  \le \alpha_{\ell}$.

\item $\ell$-smooth w.r.t. $\hat{y}$:
\begin{equation}\label{eqn:loss_prime}
\left \vert \frac{\partial\ell}{\partial \hat{y}}(\hat{y}, y) - \frac{\partial\ell}{\partial \hat{y}}(\hat{y}', y)\right \vert  \le \nu_{\ell} \vert \hat{y}-\hat{y}' \vert
\end{equation}
for all $\hat{y},\hat{y}',y\in[y_{\min}, y_{\max}]$.
\end{enumerate}
By the nonnegativity and continuity of $\ell(\cdot,\cdot)$ and the compactness of $[y_{\min}, y_{\max}]\times[y_{\min}, y_{\max}]$, the loss function is bounded:
\begin{equation}\label{eqn:loss_bound}
0 \le \ell(\hat{y}, y) \le \gamma_{\ell} \qquad\mbox{for all }\hat{y},y\in[y_{\min}, y_{\max}].
\end{equation}
In the case of the binary cross-entropy $\ell(\hat{y},y) = -y\ln \hat{y}$, the Lipschitzity condition does not hold.
However, it can be easily remedied by rescaling or clipping $\hat{y}$ to the range of $[\epsilon,1]$
for a $0< \epsilon < 1$, so that $\ell(\hat{y},y) \le -\ln\epsilon$.

We remark that the following Lipschitzity of a composite function is assumed in \cite{verma-zhang19}:
\begin{eqnarray*}
\left\|\nabla_{\theta}[\ell(f(v \vert \theta), y)] - \nabla_{\theta}[\ell(f(v \vert \theta'), y)]\right\|_2
&\le& \alpha_{\ell}\|\nabla_{\theta}f(v \vert \theta) - \nabla_{\theta}f(v \vert \theta')\|_2.
\end{eqnarray*}
Here, $\|\cdot\|_2$ denotes the Euclidean norm and $f(v\vert\theta)$ denotes the output of a neural network with
parameters $\theta$ at vertex $v$. The $\theta$ and $\theta'$ are two sets of parameters for the network.
The $\nabla_{\theta}$ denotes the gradient operator w.r.t. $\theta$ (i.e. $\frac{\partial}{\partial\theta}$).
Our assumption \eqref{eqn:loss_prime} is solely in terms of
$\frac{\partial\ell}{\partial \hat{y}}$ and is more fundamental.

\subsection{Network Outputs at a Single Vertex}

In this subsection, we provide more details about the characteristics of the network outputs at each vertex and
devise a bound of the hypergraph filter outputs that will be useful in the next section.
We remark that a similar analysis can also be conducted for network outputs at each hyperedge.
Here, we focus on the analysis of vertex propagation \eqref{eqn:f} only.

For a vertex $v\in\mc{V}$, let $\mb{e}(v)$ be a binary vector whose $j$-th entry is $1$ if $v$ is the $j$-th vertex
and is $0$ otherwise. Denote the learnable parameters with $\theta=(Q_{V}^{\top}, Q_{E}^{\top})^{\top}\in\mathbb{R}^{(F_V+F_E)\times O}$.
The output \eqref{eqn:f} for a vertex $v$ is given by
\begin{eqnarray*}
f(v \vert \theta) &=& \sigma\left(A_{v}Q_{V} + B_{v}Q_{E}\right) \in\mathbb{R}^{1\times O},
\end{eqnarray*}
where $A_{v}:=\mb{e}(v)^{\top}\tilde{H} \tilde{H}^{\top} X_{V}$ and
$B_{v}:=\mb{e}(v)^{\top}\tilde{H}D_{E}^{-\frac{1}{2}}X_{E}$.
Note that $A_{v}$ is a linear combination of the vertex feature vectors over the neighboring vertices of $v$
and $B_{v}$ is a linear combination of the edge feature vectors over the hyperedges joining $v$.
For notational simplicity, we assume that the output dimension is $O=1$ .
The analysis also works for a general $O$ with minor modifications.
Let
$$
d_{v} = A_{v}Q_{V} + B_{v}Q_{E}.
$$
Then, we have
\begin{eqnarray}
f(v \vert \theta) &=& \sigma(d_{v}) \in \mathbb{R} \label{eqn:f_d}\\
\nabla_{\theta} f(v \vert \theta) &=& \sigma'(d_{v})
\left(A_{v} \; B_{v}\right)^{\top} \in\mathbb{R}^{(F_V+F_E)\times 1}.\label{eqn:grad_f}
\end{eqnarray}
Here, $\nabla_{\theta}$ is the gradient operator w.r.t. $\theta$ and $\sigma'$ is the derivative of $\sigma$.

Denote by $\|\cdot\|_2$ the matrix $2$-norm. Note that
\setlength{\arraycolsep}{0.1em}%
\begin{eqnarray*}
\left\|
\left(A_{v} \; B_{v}\right)
\right\|_2^2
&=& \left\|\mb{e}(v)^{\top} \tilde{H}\tilde{H}^{\top}X_{V}\right\|_2^2
+ \left\|\mb{e}(v)^{\top} \tilde{H} D_{E}^{-\frac{1}{2}}X_{E}\right\|_2^2\\
&\le& \|X_V\|_2^2\|\tilde{H}\tilde{H}^{\top}\|_2^2 + \|X_E\|_2^2\|\tilde{H}\|_2^2\| D_{E}^{-\frac{1}{2}}\|_2^2\\
&\le& \|X_V\|_2^2\;\mu(\tilde{H})^4 + \|X_E\|_2^2\;\mu(\tilde{H})^2,
\end{eqnarray*}
\setlength{\arraycolsep}{5pt}%
where $\mu(\tilde{H})=\|\tilde{H}\|_2$ is the largest singular value of $\tilde{H}$.
The last inequality follows from the fact that the diagonal entries of $D_{E}$ are positive integers
so that $\| D_{E}^{-\frac{1}{2}}\|_2\le 1$.
Let
\begin{equation}\label{eqn:g_max}
g_{\max} := \mu(\tilde{H})\sqrt{\|X_V\|_2^2 \;\mu(\tilde{H})^2 + \|X_E\|_2^2}.
\end{equation}
It follows that
\begin{equation}
\left\|
\left(A_{v} \; B_{v}\right)
\right\|_2
\le g_{\max}.\label{eqn:g_max_bound}
\end{equation}
Moreover, by \eqref{eqn:grad_f} and \eqref{eqn:sigma}, we have
\begin{equation}\label{eqn:grad_f_bound}
\|\nabla_{\theta} f(v \vert \theta)\|_2 \le \alpha_{\sigma}g_{\max}.
\end{equation}

Finally, we remark that if the two terms in the sum in \eqref{eqn:f} are weighted
by $\alpha$ and $1-\alpha$ respectively as described in \eqref{eqn:f_weighted} and \eqref{eqn:g_weighted},
then \eqref{eqn:g_max_bound} and \eqref{eqn:grad_f_bound} hold with
\begin{equation}\label{eqn:g_max_weighted}
g_{\max}:=\mu(\tilde{H})\sqrt{\alpha^2 \|X_V\|_2^2\; \mu(\tilde{H})^2 + (1-\alpha)^2 \|X_E\|_2^2}.
\end{equation}

\subsection{The SGD Algorithm}\label{sec:sgd}

Let $\mc{D}$ be an unknown joint distribution of the vertex and the associated label.
Let
$$
\mc{S}=\{(v_1,y_1),(v_2,y_2), \ldots, (v_n, y_n)\}
$$
be a set of $n$ i.i.d. samples from $\mc{D}$.
The set $\mc{S}$ serves as the training set for the HCoN.
The objective function of an HCoN is given by
$$
\mc{L}(\theta) = \frac{1}{n}\sum_{i=1}^{n}\ell(f(v_i \vert \theta), y_i).
$$
The learning task is a transductive semi-supervised task. The incident matrix $H$ and
the feature matrices $X_{V}$ and $X_{E}$ are considered given. The formation of a training set is a sampling
of vertices (and the associated labels) in the network. The training process (or learning process) refers to
the minimization of $\mc{L}$ via SGD. A testing sample is another i.i.d. sample of a vertex in the network.
The testing process refers to taking the output at the testing sample and comparing
it to the known label. The sample space is therefore a finite space, to which the concentration
inequality we used below applies.

We consider $T$ iterations of the SGD algorithm, where the batch size is $1$. At the $t$-th iteration,
a sample $(v_{i_t},y_{i_t})$ is drawn from $\mc{S}$ with replacement.
The parameters $\theta=(Q_V^{\top},Q_E^{\top})^{\top}$ are updated as follows:
$$
\theta_{t} = \theta_{t-1} - \eta \nabla_{\theta}\ell(f(v_{i_t} \vert \theta_{t-1}), y_{i_t})
$$
for $t=1,2,\ldots,T$, where $\eta>0$ is the learning rate.
The final parameters learnt are $\theta_T$, also denoted by $\theta(\mc{S},A)$.
The variable $A$ denotes a particular randomization of SGD, i.e. the sequence $(i_1,i_2,\ldots,i_T)$.

Let $\mc{S}'=\{(v_1',y_1'),(v_2',y_2'), \ldots, (v_n', y_n')\}$
be a training set that differs from $\mc{S}$ with one sample. That is, for some $1\le i^*\le n$,
$$
\left\{
\begin{array}{ll}
(v_{i}', y_{i}') = (v_{i}, y_{i}), & \mbox{if } i\ne i^*\\
(v_{i}', y_{i}') \ne (v_{i}, y_{i}), & \mbox{if } i = i^*.
\end{array}
\right.
$$
Let $\theta'=({Q_V'}^{\top},{Q_E'}^{\top})^{\top}$ be the parameters learned with $\mc{S}'$.
The SGD update is given by
$$
\theta_{t}' = \theta_{t-1}' - \eta \nabla_{\theta}\ell(f(v_{i_t}' \vert \theta_{t-1}'), y_{i_t}')
$$
for $t=1,2,\ldots,T$.
The initial parameters for $\mc{S}$ and $\mc{S}'$ are set equal, i.e. $\theta_0=\theta_0'$.
The parameters learned, using the same randomization $A$ as that for $\mc{S}$, are denoted by
$\theta_T'=\theta'(\mc{S}',A)$.

The difference between the parameters $\theta_t$ and $\theta_t'$ at the $t$-th SGD iteration is
$$
\Delta \theta_t := \theta_t - \theta_t'
= \left[
\begin{array}{c}
Q_{V,t} - Q_{V,t}'\\
Q_{E,t} - Q_{E,t}'
\end{array}
\right].
$$
Since $\mc{S}$ and $\mc{S}'$ are identical except for the $i^*$-th sample,
if $t^*$ is the first iteration at which $(v_{i^*},y_{i^*})$ and
$(v_{i^*}',y_{i^*}')$ are sampled, we have $\Delta \theta_t=0$ for all $t < t^*$.
Note that
\begin{eqnarray}
\Delta \theta_t  &=& \Delta\theta_{t-1}
- \eta \left[\nabla_{\theta}\ell(f(v_{i_t} \vert \theta_{t-1}), y_{i_t})
- \nabla_{\theta}\ell(f(v_{i_t}' \vert \theta_{t-1}'), y_{i_t}')\right].\label{eqn:Delta}
\end{eqnarray}

\section{Main Results}

Following the approach of \cite{hardt-recht-singer15} and \cite{verma-zhang19},
we first establish the uniform stability of SGD and then obtain the generalization guarantees.

\subsection{Uniform Stability}

Recall that $\mc{S}$ and $\mc{S}'$ are two training sets that differ only by the $i^*$-th sample.
In Lemma \ref{lm:1} and Lemma \ref{lm:2}, we bound the gradient difference when the same sample and different samples
are used, respectively. Then, we devise the difference between the SGD updates for
$\mc{S}$ and $\mc{S}'$ in Lemma \ref{lm:3}. The stability then follows in Theorem \ref{thm:1}.
Note that $\sigma'$ is the derivative of $\sigma$. Other variables with a prime ($'$) denote
quantities derived from the perturbed training set $\mc{S}'$.

\begin{lemma}\label{lm:1}
At the $t$-th SGD iteration, we have
\begin{eqnarray*}
\left\|\nabla_{\theta}\ell(f(v \vert \theta_{t-1}), y) - \nabla_{\theta}\ell(f(v \vert \theta_{t-1}'), y)\right\|_2
&\le& (\alpha_{\ell} \nu_{\sigma} + \nu_{\ell}\alpha_{\sigma}^2) g_{\max}^2\|\Delta \theta_{t-1}\|_2.
\end{eqnarray*}
\end{lemma}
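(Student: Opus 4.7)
The plan is to apply the chain rule to express $\nabla_{\theta}\ell(f(v|\theta),y)$ as a scalar times a vector, then split the difference via the standard add-and-subtract trick to reduce everything to two pieces: one controlled by the smoothness of $\ell$ and one controlled by the smoothness of $\sigma$.

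First I would use $\nabla_{\theta}\ell(f(v|\theta),y) = \frac{\partial \ell}{\partial \hat y}(f(v|\theta),y)\,\nabla_{\theta}f(v|\theta)$. Writing $\ell'_{\theta} := \frac{\partial \ell}{\partial \hat y}(f(v|\theta),y)$ and using \eqref{eqn:grad_f}, which gives $\nabla_{\theta}f(v|\theta) = \sigma'(d_v)\,(A_v\ B_v)^{\top}$, the difference I need to bound becomes
\begin{equation*}
\ell'_{\theta_{t-1}}\sigma'(d_v)(A_v\ B_v)^{\top} - \ell'_{\theta_{t-1}'}\sigma'(d_v')(A_v\ B_v)^{\top}.
\end{equation*}
Adding and subtracting $\ell'_{\theta_{t-1}'}\sigma'(d_v)(A_v\ B_v)^{\top}$ and taking $\|\cdot\|_2$ splits this as
\begin{equation*}
\bigl|\ell'_{\theta_{t-1}} - \ell'_{\theta_{t-1}'}\bigr|\,\|\nabla_{\theta}f(v|\theta_{t-1})\|_2 + \bigl|\ell'_{\theta_{t-1}'}\bigr|\,\|\nabla_{\theta}f(v|\theta_{t-1}) - \nabla_{\theta}f(v|\theta_{t-1}')\|_2.
\end{equation*}

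Next I would bound each factor using the assumptions. The second term in the sum uses the $\ell$-Lipschitz bound \eqref{eqn:loss} to get $|\ell'_{\theta_{t-1}'}|\le \alpha_{\ell}$; then, since $d_v$ is linear in $\theta$ with matrix $(A_v\ B_v)$ bounded in norm by $g_{\max}$ via \eqref{eqn:g_max_bound}, the $\sigma$-smoothness \eqref{eqn:sigma_prime} gives
\begin{equation*}
\|\nabla_{\theta}f(v|\theta_{t-1}) - \nabla_{\theta}f(v|\theta_{t-1}')\|_2 = |\sigma'(d_v)-\sigma'(d_v')|\,\|(A_v\ B_v)\|_2 \le \nu_{\sigma}g_{\max}^2\|\Delta\theta_{t-1}\|_2.
\end{equation*}
For the first term, $\|\nabla_{\theta}f(v|\theta_{t-1})\|_2\le \alpha_{\sigma}g_{\max}$ by \eqref{eqn:grad_f_bound}, and the $\ell$-smoothness \eqref{eqn:loss_prime} combined with the $\sigma$-Lipschitz bound \eqref{eqn:sigma} gives
\begin{equation*}
|\ell'_{\theta_{t-1}} - \ell'_{\theta_{t-1}'}|\le \nu_{\ell}|f(v|\theta_{t-1})-f(v|\theta_{t-1}')| \le \nu_{\ell}\alpha_{\sigma}|d_v-d_v'|\le \nu_{\ell}\alpha_{\sigma}g_{\max}\|\Delta\theta_{t-1}\|_2.
\end{equation*}
Combining these yields the stated coefficient $\alpha_{\ell}\nu_{\sigma}+\nu_{\ell}\alpha_{\sigma}^2$ times $g_{\max}^2\|\Delta\theta_{t-1}\|_2$.

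The only mildly subtle point is that the bound must pass through $d_v$ rather than $\theta$ directly, so one has to remember that the extra factor of $g_{\max}$ enters when converting $|d_v-d_v'|$ into $\|\Delta\theta_{t-1}\|_2$; this is exactly why both terms end up with $g_{\max}^2$ and why the bound contains a product of two constants of the form (Lipschitz)(smoothness) rather than just a single smoothness constant. Aside from being careful on that conversion, the argument is routine add-and-subtract plus triangle inequality.
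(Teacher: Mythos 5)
Your proposal is correct and follows essentially the same route as the paper: chain rule, an add-and-subtract decomposition, the bound $|d_v-d_v'|\le g_{\max}\|\Delta\theta_{t-1}\|_2$, and the Lipschitz/smoothness constants of $\ell$ and $\sigma$ in exactly the same combination. The only (immaterial) difference is the choice of cross term — you hold $\frac{\partial\ell}{\partial\hat y}$ at the perturbed point and $\sigma'$ at the unperturbed one, while the paper does the reverse — and both yield the identical bound.
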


\begin{proof} Let $f=f(v \vert \theta_{t-1})$ and $f'=f(v \vert \theta_{t-1}')$.
Let
\begin{eqnarray*}
d &=& A_{v}Q_{V,t-1} + B_{v}Q_{E,t-1}\\
d' &=& A_{v}Q_{V,t-1}' + B_{v}Q_{E,t-1}'.
\end{eqnarray*}
By \eqref{eqn:f_d} and \eqref{eqn:grad_f}, we have $f=\sigma(d)$, $f'=\sigma(d')$, and
\begin{eqnarray*}
\nabla_{\theta}f &=& \sigma'(d) \left(A_{v} \; B_{v}\right)^{\top}\\
\nabla_{\theta}f' &=& \sigma'(d') \left(A_{v} \; B_{v}\right)^{\top}.
\end{eqnarray*}
Note that
$$
d-d' = \left(A_{v} \; B_{v}\right)\Delta\theta_{t-1}.
$$
Therefore, by \eqref{eqn:g_max_bound}, we have
\begin{equation}\label{eqn:d_diff}
 \vert d-d' \vert  \le g_{\max}\|\Delta\theta_{t-1}\|_2.
\end{equation}
Hence,
\begin{eqnarray*}
\left\|\nabla_{\theta}\ell(f, y) - \nabla_{\theta}\ell(f', y)\right\|_2
&=& \left\|\frac{\partial}{\partial \hat{y}}\ell(f, y) \nabla_{\theta}f - \frac{\partial}{\partial \hat{y}}\ell(f', y) \nabla_{\theta}f'\right\|_2\\
&\le& \left\|\frac{\partial}{\partial \hat{y}}\ell(f, y) \nabla_{\theta}f - \frac{\partial}{\partial \hat{y}}\ell(f, y) \nabla_{\theta}f'\right\|_2\\
&& + \left\|\frac{\partial}{\partial \hat{y}}\ell(f, y) \nabla_{\theta}f' - \frac{\partial}{\partial \hat{y}}\ell(f', y) \nabla_{\theta}f'\right\|_2\\
&=& \left \vert \frac{\partial}{\partial \hat{y}}\ell(f, y)\right \vert
\left \vert  \sigma'(d) -  \sigma'(d')\right \vert
\left\|
\left(A_{v} \; B_{v}\right)
\right\|_2\\
&& + \left \vert \frac{\partial}{\partial \hat{y}}\ell(f, y) - \frac{\partial}{\partial \hat{y}}\ell(f', y)\right \vert
\left \vert  \sigma'(d')\right \vert
\left\|
\left(A_{v} \; B_{v}\right)
\right\|_2\\
&\le& \alpha_{\ell} \nu_{\sigma} \vert d-d' \vert g_{\max} + \nu_{\ell} \vert f-f' \vert \alpha_{\sigma}g_{\max}\\
&\le& \alpha_{\ell} \nu_{\sigma} \vert d-d' \vert g_{\max} + \nu_{\ell}\alpha_{\sigma}^2 \vert d-d' \vert g_{\max}\\
&\le& (\alpha_{\ell} \nu_{\sigma} + \nu_{\ell}\alpha_{\sigma}^2) g_{\max}^2\|\Delta \theta_{t-1}\|_2.
\end{eqnarray*}
\end{proof}

\begin{lemma}\label{lm:2}
At the $t$-th SGD iteration, we have
\begin{eqnarray*}
\left\|\nabla_{\theta}\ell(f(v \vert \theta_{t-1}), y) - \nabla_{\theta}\ell(f(v' \vert \theta_{t-1}'), y')\right\|_2
&\le& 2\alpha_{\ell}\alpha_{\sigma}g_{\max}.
\end{eqnarray*}
\end{lemma}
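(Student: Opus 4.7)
The plan is to bound the two gradients separately, since they involve different samples and different parameters, and then combine via the triangle inequality. Concretely, I would first split
\begin{eqnarray*}
\left\|\nabla_{\theta}\ell(f(v \vert \theta_{t-1}), y) - \nabla_{\theta}\ell(f(v' \vert \theta_{t-1}'), y')\right\|_2
&\le& \left\|\nabla_{\theta}\ell(f(v \vert \theta_{t-1}), y)\right\|_2 \\
&& +\, \left\|\nabla_{\theta}\ell(f(v' \vert \theta_{t-1}'), y')\right\|_2
\end{eqnarray*}
and then show each term is at most $\alpha_{\ell}\alpha_{\sigma}g_{\max}$.

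For each single term, I would use the chain rule to write $\nabla_{\theta}\ell(f(v\vert\theta),y) = \frac{\partial \ell}{\partial \hat{y}}(f(v\vert\theta),y)\,\nabla_{\theta} f(v\vert\theta)$. Then the Lipschitz assumption on $\ell$ in \eqref{eqn:loss} gives $\vert \frac{\partial \ell}{\partial \hat{y}}\vert \le \alpha_{\ell}$, while the bound \eqref{eqn:grad_f_bound} established in the preceding subsection gives $\|\nabla_{\theta} f(v\vert\theta)\|_2 \le \alpha_{\sigma}g_{\max}$. Multiplying the scalar bound by the vector norm bound yields $\alpha_{\ell}\alpha_{\sigma}g_{\max}$ per term, and summing produces the claimed factor of $2$.

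I do not expect any real obstacle here, since the lemma is a ``worst case'' bound rather than a stability-type estimate: unlike Lemma \ref{lm:1}, where one has to exploit cancellation between gradients evaluated at the same vertex under two different parameter sets, here the two gradients involve distinct vertices and distinct parameters, so no cancellation is possible or needed. The only minor point to check is that the bound \eqref{eqn:grad_f_bound} applies uniformly in the vertex $v$ and the parameter vector $\theta$, which is clear from its derivation: the constant $g_{\max}$ in \eqref{eqn:g_max} depends only on $\mu(\tilde H)$, $\|X_V\|_2$, and $\|X_E\|_2$, and $\alpha_{\sigma}$ is a universal bound on $\vert\sigma'\vert$. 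Hence the same bound holds for both $(v,\theta_{t-1})$ and $(v',\theta_{t-1}')$, completing the proof.
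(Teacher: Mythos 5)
Your proposal is correct and follows essentially the same route as the paper: both apply the triangle inequality to separate the two gradients, then bound each term by $\alpha_{\ell}\alpha_{\sigma}g_{\max}$ via the chain rule, the Lipschitz bound $\vert\frac{\partial\ell}{\partial\hat{y}}\vert\le\alpha_{\ell}$ from \eqref{eqn:loss}, and the gradient bound \eqref{eqn:grad_f_bound}. No gaps; your remark that $g_{\max}$ is uniform in $v$ and $\theta$ is the right point to verify and is indeed immediate from \eqref{eqn:g_max}.
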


\begin{proof} Let $f=f(v \vert \theta_{t-1})$ and $f'=f(v' \vert \theta_{t-1}')$.
By \eqref{eqn:loss} and \eqref{eqn:grad_f_bound}, we have
\begin{eqnarray*}
\left\|\nabla_{\theta}\ell(f, y) - \nabla_{\theta}\ell(f', y')\right\|_2
&=& \left\|\frac{\partial}{\partial \hat{y}}\ell(f, y) \nabla_{\theta}f - \frac{\partial}{\partial \hat{y}}\ell(f', y') \nabla_{\theta}f'\right\|_2\\
&\le& \left \vert \frac{\partial}{\partial \hat{y}}\ell(f, y)\right \vert  \left\|\nabla_{\theta}f\right\|_2 +
\left \vert \frac{\partial}{\partial \hat{y}}\ell(f', y')\right \vert \left\|\nabla_{\theta}f'\right\|_2\\
&\le& 2\alpha_{\ell}\alpha_{\sigma}g_{\max}.
\end{eqnarray*}
\end{proof}

\begin{lemma}\label{lm:3}
Let $\mc{S}$ and $\mc{S}'$ be two training sets that differ by one sample.
Let $\theta=\theta(\mc{S},A)$ and $\theta' = \theta'(\mc{S}',A)$ be the graph filter parameters of the HCoN models
trained using SGD for $T$ iterations on $\mc{S}$ and $\mc{S}'$, respectively.
Then, the expected difference in the filter parameters is bounded by,
$$
\mbb{E}_A[\|\Delta\theta_T\|_2] \le \frac{\kappa_0}{n},
$$
where
$$
\kappa_0 := \frac{2\alpha_{\ell}\alpha_{\sigma}\left\{\left[1+\eta (\alpha_{\ell} \nu_{\sigma} + \nu_{\ell}\alpha_{\sigma}^2) g_{\max}^2\right]^{T}-1\right\}}
{(\alpha_{\ell} \nu_{\sigma} + \nu_{\ell}\alpha_{\sigma}^2) g_{\max}}.
$$
\end{lemma}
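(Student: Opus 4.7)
The plan is to control the recursion \eqref{eqn:Delta} by taking expectations over the random index $i_t$ conditional on the history up to step $t-1$, then to unroll the resulting one-step inequality. Because $\mathcal{S}$ and $\mathcal{S}'$ differ only in the $i^*$-th sample, the index $i_t$ drawn by the randomization $A$ hits the differing slot with probability $1/n$ and agrees with a common sample with probability $1-1/n$. This dichotomy is exactly what Lemma \ref{lm:1} and Lemma \ref{lm:2} are designed to handle.

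First I would take norms in \eqref{eqn:Delta} and apply the triangle inequality, obtaining $\|\Delta\theta_t\|_2 \le \|\Delta\theta_{t-1}\|_2 + \eta \|G_t - G_t'\|_2$, where $G_t$ and $G_t'$ denote the two SGD gradients at step $t$. Conditional on $i_t = i^*$ I invoke Lemma \ref{lm:2} to bound $\|G_t - G_t'\|_2 \le 2\alpha_{\ell}\alpha_{\sigma}g_{\max}$, and conditional on $i_t \ne i^*$ I invoke Lemma \ref{lm:1} to bound $\|G_t - G_t'\|_2 \le L\|\Delta\theta_{t-1}\|_2$ with $L := (\alpha_{\ell}\nu_{\sigma} + \nu_{\ell}\alpha_{\sigma}^2)g_{\max}^2$. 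Taking conditional expectations and then total expectation yields
\begin{equation*}
\mathbb{E}_A[\|\Delta\theta_t\|_2] \le \Big(1-\tfrac{1}{n}\Big)(1+\eta L)\,\mathbb{E}_A[\|\Delta\theta_{t-1}\|_2] + \tfrac{1}{n}\Big(\mathbb{E}_A[\|\Delta\theta_{t-1}\|_2] + \eta\cdot 2\alpha_{\ell}\alpha_{\sigma}g_{\max}\Big).
\end{equation*}
Using $(1-1/n)(1+\eta L)+1/n \le 1+\eta L$, this simplifies to the clean affine recursion $\mathbb{E}_A[\|\Delta\theta_t\|_2] \le (1+\eta L)\,\mathbb{E}_A[\|\Delta\theta_{t-1}\|_2] + \eta\,G/n$ with $G := 2\alpha_{\ell}\alpha_{\sigma}g_{\max}$.

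Finally, since $\Delta\theta_0 = 0$ (both SGD runs share the same initialization), unrolling the recursion gives
\begin{equation*}
\mathbb{E}_A[\|\Delta\theta_T\|_2] \le \frac{\eta G}{n}\sum_{t=0}^{T-1}(1+\eta L)^{t} = \frac{G\big[(1+\eta L)^T - 1\big]}{nL},
\end{equation*}
and substituting the definitions of $G$ and $L$ reproduces $\kappa_0/n$ exactly. The only mildly delicate point is the bookkeeping of the conditioning: one must make sure that Lemma \ref{lm:1}, which bounds the gradient difference evaluated at the same sample but different parameters, applies pointwise in $\omega$ so that the bound can be pulled out of the conditional expectation, and that the iterates $\theta_{t-1},\theta_{t-1}'$ are measurable with respect to the history before $i_t$ is drawn. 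With that in hand the argument is essentially a geometric-series computation and no further structural difficulty arises.
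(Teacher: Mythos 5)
Your proposal is correct and follows essentially the same route as the paper's proof: both take norms in \eqref{eqn:Delta}, split the expectation over $i_t$ into the two cases handled by Lemma \ref{lm:1} (probability $\tfrac{n-1}{n}$) and Lemma \ref{lm:2} (probability $\tfrac{1}{n}$), arrive at the affine recursion $\mbb{E}_A[\|\Delta\theta_t\|_2]\le(1+\eta L)\mbb{E}_A[\|\Delta\theta_{t-1}\|_2]+\eta G/n$, and unroll it from $\Delta\theta_0=0$ as a geometric series. Your extra remark about measurability of $\theta_{t-1},\theta_{t-1}'$ with respect to the history before $i_t$ is drawn is a point the paper leaves implicit, but it does not change the argument.
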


\begin{proof}
Let $1\le t\le T$ and let
$$
\mb{h}=\nabla_{\theta}\ell(f(v \vert \theta_{t-1}), y)
-\nabla_{\theta}\ell(f(v' \vert \theta_{t-1}'), y').
$$
By \eqref{eqn:Delta}, we have
\begin{eqnarray*}
\mbb{E}_A[\|\Delta\theta_{t}\|_2] &\le& \mbb{E}_A[\|\Delta\theta_{t-1}\|_2]
+ \eta \mbb{E}_A[\|\mb{h}\|_2].
\end{eqnarray*}
Note that the probabilities of the two scenarios considered in Lemma \ref{lm:1} and Lemma \ref{lm:2}
are $\frac{n-1}{n}$ and $\frac{1}{n}$, respectively.
By Lemma \ref{lm:1} and Lemma \ref{lm:2}, we have
\begin{eqnarray*}
\mbb{E}_A[\|\mb{h}\|_2] &\le& \frac{n-1}{n}\cdot\mbb{E}_A[C\|\Delta\theta_{t-1}\|_2] + \frac{1}{n}\cdot \mbb{E}_A[C']\\
&\le& C\mbb{E}_A[\|\Delta\theta_{t-1}\|_2] + \frac{1}{n}\cdot C',
\end{eqnarray*}
where $C:=(\alpha_{\ell} \nu_{\sigma} + \nu_{\ell}\alpha_{\sigma}^2) g_{\max}^2$ and
$C':=2\alpha_{\ell}\alpha_{\sigma}g_{\max}$.
Hence, we have
\begin{eqnarray*}
\mbb{E}_A[\|\Delta\theta_{t}\|_2]
&\le& (1+\eta C) \mbb{E}_A[\|\Delta\theta_{t-1}\|_2] +  \frac{\eta C'}{n}.
\end{eqnarray*}
Solving the above recursion with the initial condition $\Delta\theta_0=0$ yields
\begin{eqnarray*}
\mbb{E}_A[\|\Delta\theta_{T}\|_2] &\le& \frac{\eta C'}{n} \cdot \frac{(1+\eta C)^{T}-1}{\eta C}
= \frac{\kappa_0}{n}.
\end{eqnarray*}
\end{proof}

Next, we prove the uniform stability of the single-layer HCoN model trained using the SGD algorithm.
Uniform stability has been introduced in \cite{bousquet-elisseeff02} for the study of nonrandomized algorithms,
where the algorithms are assumed to be insensitive to the order of the training set. However, SGD is a randomized algorithm
that depends on the order of the random samples. In \cite{hardt-recht-singer15}, the notion of uniform stability
was extended to randomized algorithms. A randomized algorithm is said to be \emph{uniformly stable} if
there exists a constant $\beta$, possibly depending on $n$, such that
$$
\sup_{\mc{S},\mc{S}',v,y} \vert \mbb{E}_A[\ell(f(v \vert \theta), y) - \ell(f(v \vert \theta'), y)] \vert  \le \beta.
$$
Here, $\theta=\theta(\mc{S},A)$ and $\theta' = \theta'(\mc{S}',A)$ are the parameters learned with the datasets $\mc{S}$ and $\mc{S}'$, respectively.
Thus the difference in the loss function values is averaged over all randomizations.
In this paper, we adopt this notion of uniform stability to establish a bound of the generalization gap.

\begin{theorem}[Uniform Stability]\label{thm:1}
The single-layer HCoN model trained using the SGD algorithm for $T$ iterations is uniformly stable:
$$
\sup_{\mc{S},\mc{S}',v,y} \vert \mbb{E}_A[\ell(f(v \vert \theta), y) - \ell(f(v \vert \theta'), y)] \vert  \le \frac{\kappa}{n},
$$
where
\begin{eqnarray}
\kappa &:=& \alpha_{\ell}\alpha_{\sigma}g_{\max}\kappa_0 \nonumber\\
&=& \frac{2\alpha_{\ell}^2\alpha_{\sigma}^2\left\{\left[1+\eta (\alpha_{\ell} \nu_{\sigma} + \nu_{\ell}\alpha_{\sigma}^2) g_{\max}^2\right]^{T}-1\right\}}
{\alpha_{\ell} \nu_{\sigma} + \nu_{\ell}\alpha_{\sigma}^2}.\label{eqn:kappa}
\end{eqnarray}
\end{theorem}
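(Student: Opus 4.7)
The plan is to reduce the stability bound to the parameter-difference bound already established in Lemma \ref{lm:3}, by chaining the Lipschitz constants of $\ell$ and $\sigma$ together with the linear-algebra bound on the hypergraph filter outputs from \eqref{eqn:g_max_bound}. Because the supremum on the left-hand side is taken over all $(\mc{S},\mc{S}',v,y)$, I would work pointwise for an arbitrary test vertex $v$ with true label $y$ and then observe that every step of the argument is uniform in these quantities.

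First I would bring the expectation outside the absolute value using Jensen's inequality, so it suffices to bound
$\mbb{E}_A[|\ell(f(v|\theta),y)-\ell(f(v|\theta'),y)|]$.
Then, applying the $\ell$-Lipschitz assumption \eqref{eqn:loss} with $\hat{y}=f(v|\theta)$ and $\hat{y}'=f(v|\theta')$ gives
$|\ell(f(v|\theta),y)-\ell(f(v|\theta'),y)|\le \alpha_\ell |f(v|\theta)-f(v|\theta')|$.
Writing $f(v|\theta)=\sigma(d_v)$ and $f(v|\theta')=\sigma(d_v')$ as in the proof of Lemma \ref{lm:1}, the $\sigma$-Lipschitz assumption \eqref{eqn:sigma} yields $|f(v|\theta)-f(v|\theta')|\le \alpha_\sigma |d_v-d_v'|$. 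Finally, since $d_v-d_v'=(A_v\;B_v)\Delta\theta_T$, the filter-norm bound \eqref{eqn:g_max_bound} gives the key pointwise inequality
$$|\ell(f(v|\theta),y)-\ell(f(v|\theta'),y)|\le \alpha_\ell \alpha_\sigma g_{\max}\|\Delta\theta_T\|_2.$$

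Taking expectation over the SGD randomization $A$ on both sides and invoking Lemma \ref{lm:3} produces the bound $\alpha_\ell\alpha_\sigma g_{\max}\cdot\kappa_0/n$, which matches the definition $\kappa=\alpha_\ell\alpha_\sigma g_{\max}\kappa_0$ in \eqref{eqn:kappa}. Because none of the constants $\alpha_\ell,\alpha_\sigma,g_{\max},\kappa_0$ depend on the particular choice of $(\mc{S},\mc{S}',v,y)$---recall that $g_{\max}$ is a uniform bound arising only from $X_V,X_E,\tilde{H}$, and Lemma \ref{lm:3} is stated for an arbitrary one-sample perturbation---the supremum can be taken freely to conclude.

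There is no substantive obstacle once Lemmas \ref{lm:1}--\ref{lm:3} are in hand; the work is essentially bookkeeping. The only conceptual care required is in the order of the three Lipschitz steps (loss, activation, filter) and in remembering that the same vertex $v$ is used on both sides, so that $A_v$ and $B_v$ cancel cleanly when forming $d_v-d_v'$ and one does not incur the cruder bound $2\alpha_\ell\alpha_\sigma g_{\max}$ of Lemma \ref{lm:2}. Pulling the expectation inside via Jensen at the start is what allows $\mbb{E}_A[\|\Delta\theta_T\|_2]$ from Lemma \ref{lm:3} to be used directly.
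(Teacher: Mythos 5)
Your proposal is correct and follows essentially the same route as the paper's proof: Jensen's inequality, then the chain of Lipschitz bounds for $\ell$ and $\sigma$, the filter-norm bound \eqref{eqn:g_max_bound} giving $\vert d_v-d_v'\vert\le g_{\max}\Vert\Delta\theta_T\Vert_2$, and finally Lemma \ref{lm:3}. The remarks on uniformity in $(\mc{S},\mc{S}',v,y)$ and on using the same vertex on both sides are accurate and consistent with the paper's argument.
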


\begin{proof}
Let $f=f(v \vert \theta)$ and $f'=f(v \vert \theta')$.
By \eqref{eqn:f_d}, we have $f = \sigma(d)$ and $f' = \sigma(d')$, where
\begin{eqnarray*}
d &=& A_{v}Q_{V} + B_{v}Q_{E}\\
d' &=& A_{v}Q_{V}' + B_{v}Q_{E}'.
\end{eqnarray*}
Hence, by \eqref{eqn:loss}, \eqref{eqn:sigma}, \eqref{eqn:d_diff} and Lemma \ref{lm:3}, we have
\begin{eqnarray*}
\left \vert \mbb{E}_A\left[\ell(f(v \vert \theta), y) - \ell(f(v \vert \theta'), y)\right]\right \vert
&\le& \mbb{E}_A\left[\left \vert \ell(f(v \vert \theta), y) - \ell(f(v \vert \theta'), y)\right \vert \right]\\
&\le& \alpha_{\ell}\mbb{E}_A\left[\left \vert f(v \vert \theta) - f(v \vert \theta')\right \vert \right]\\
&\le& \alpha_{\ell}\alpha_{\sigma}\mbb{E}_A\left[\left \vert d-d'\right \vert \right]\\
&\le& \alpha_{\ell}\alpha_{\sigma}g_{\max}\mbb{E}_A\left[\left\|\Delta \theta_T\right\|_2\right]\\
&\le& \alpha_{\ell}\alpha_{\sigma}g_{\max} \cdot \frac{\kappa_0}{n}
= \frac{\kappa}{n}.
\end{eqnarray*}
\end{proof}

\subsection{Generalization Gap}

Consider an HCoN trained on $\mc{S}$ with a randomization $A$.
Denote the output of the trained network by $f(v \vert \theta(\mc{S},A))$.
Let $z=(v, y)$ be a random sample from $\mc{D}$. We denote the loss w.r.t. $z$ by
$$
\ell(\mc{S}, A, z) := \ell(f(v \vert \theta(\mc{S},A)), y).
$$
The \emph{generalization error} is defined by
$$
R(\mc{S}, A) := \mbb{E}_z[\ell(\mc{S},A,z)].
$$
The \emph{empirical risk} (a.k.a. \emph{training error}) is defined by:
$$
R_{\mathrm{emp}}(\mc{S},A) := \frac{1}{n}\sum_{i=1}^{n}\ell(\mc{S},A,z_i),
$$
where $z_i=(v_i,y_i)$ is the $i$-th sample in $\mc{S}$. The \emph{generalization gap} is given by
$$
G(\mc{S}) = \mbb{E}_A[R(\mc{S}, A) - R_{\mathrm{emp}}(\mc{S},A)].
$$

Next, we present a perturbation result for $G$. It is a general result for stable algorithms.
The proof can be found in the Appendix of \cite{verma-zhang19}. However, we also include the proof here
for completeness and we have fixed a few minor typos of \cite{verma-zhang19}.
Recall that by Theorem \ref{thm:1}, we have the uniform stability
$$
 \vert \mbb{E}_A[\ell(f(v \vert \theta), y) - \ell(f(v \vert \theta'), y)] \vert  \le \frac{\kappa}{n}.
$$

\begin{lemma}[Gap perturbation]\label{lm:4}
Let $\mc{S}$ and $\mc{S}'$ be two training sets that differ by their $i^*$-th samples. Then,
$$
 \vert G(\mc{S}) - G(\mc{S}') \vert  \le \frac{2\kappa + \gamma_{\ell}}{n}.
$$
Here, $\kappa$ is given by \eqref{eqn:kappa} and $\gamma_{\ell}$ is the upper bound of $\ell$.
\end{lemma}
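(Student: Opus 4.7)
The plan is to expand $G(\mc{S}) - G(\mc{S}')$ using the definitions of $R$ and $R_{\mathrm{emp}}$, and then apply the triangle inequality to reduce the task to bounding two quantities: the change in generalization error, $\mbb{E}_A[R(\mc{S},A) - R(\mc{S}',A)]$, and the change in empirical risk, $\mbb{E}_A[R_{\mathrm{emp}}(\mc{S},A) - R_{\mathrm{emp}}(\mc{S}',A)]$. The first of these is controlled purely by the uniform stability from Theorem \ref{thm:1}, while the second requires splitting the empirical average into the indices $i\ne i^*$ (where the sample is common to $\mc{S}$ and $\mc{S}'$) and the single index $i=i^*$ (where the sample itself changes).

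First I would write $\mbb{E}_A[R(\mc{S},A) - R(\mc{S}',A)] = \mbb{E}_z\,\mbb{E}_A[\ell(\mc{S},A,z) - \ell(\mc{S}',A,z)]$ by Fubini, and then apply Theorem \ref{thm:1} pointwise in $z$: since for any fixed $z=(v,y)$ the difference inside is $\ell(f(v\vert\theta(\mc{S},A)),y) - \ell(f(v\vert\theta'(\mc{S}',A)),y)$, uniform stability gives $|\mbb{E}_A[\,\cdot\,]|\le \kappa/n$, and taking $\mbb{E}_z$ preserves the bound $\kappa/n$.

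Second I would treat the empirical risk by writing
\begin{eqnarray*}
\mbb{E}_A[R_{\mathrm{emp}}(\mc{S},A) - R_{\mathrm{emp}}(\mc{S}',A)]
&=& \frac{1}{n}\sum_{i\ne i^*}\mbb{E}_A[\ell(\mc{S},A,z_i) - \ell(\mc{S}',A,z_i)]\\
&& + \frac{1}{n}\,\mbb{E}_A[\ell(\mc{S},A,z_{i^*}) - \ell(\mc{S}',A,z'_{i^*})].
\end{eqnarray*}
For each $i\ne i^*$ the sample $z_i$ is identical in $\mc{S}$ and $\mc{S}'$, only the trained parameters differ, so Theorem \ref{thm:1} yields an absolute bound of $\kappa/n$ per term, contributing at most $(n-1)\kappa/n^2 \le \kappa/n$. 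For the remaining $i=i^*$ term, both the trained parameters and the evaluation point change, so uniform stability does not directly apply; instead I would invoke the uniform boundedness $0\le\ell\le\gamma_{\ell}$ from \eqref{eqn:loss_bound} to bound the difference by $\gamma_{\ell}$, contributing at most $\gamma_{\ell}/n$.

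Combining these estimates via the triangle inequality gives
\[
 |G(\mc{S}) - G(\mc{S}')| \;\le\; \frac{\kappa}{n} + \frac{\kappa}{n} + \frac{\gamma_{\ell}}{n} \;=\; \frac{2\kappa + \gamma_{\ell}}{n}.
\]
The only subtlety is the bookkeeping in the empirical-risk decomposition: one must recognize that the $i=i^*$ term is inaccessible to the stability bound because the sample itself has been replaced, which is precisely what forces the extra $\gamma_{\ell}/n$ in the final estimate. Everything else is a clean application of Theorem \ref{thm:1}, linearity of expectation, and the triangle inequality.
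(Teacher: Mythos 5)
Your proposal is correct and follows essentially the same route as the paper's proof: the same split of $G(\mc{S})-G(\mc{S}')$ into the generalization-error and empirical-risk perturbations, the same Fubini-plus-stability argument for the first, and the same $i\ne i^*$ versus $i=i^*$ decomposition for the second, with the bounded loss $\gamma_{\ell}$ absorbing the replaced sample. No gaps.
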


\begin{proof}
First, consider the perturbation of $R(\mc{S},A)$:
\begin{eqnarray*}
\left \vert \mbb{E}_A\left[R(\mc{S},A) - R(\mc{S}',A)\right]\right \vert
&=& \left \vert \mbb{E}_A\left[\mbb{E}_z\left[\ell(\mc{S},A,z) - \ell(\mc{S}',A,z)\right]\right]\right \vert \\
&=& \left \vert \mbb{E}_z\left[\mbb{E}_A\left[\ell(\mc{S},A,z) - \ell(\mc{S}',A,z)\right]\right]\right \vert \\
&\le& \mbb{E}_z\left[\left \vert \mbb{E}_A\left[\ell(\mc{S},A,z) - \ell(\mc{S}',A,z)\right]\right \vert \right]
\le \frac{\kappa}{n}.
\end{eqnarray*}
Second, consider the perturbation of $R_{\mathrm{emp}}(\mc{S},A)$:
\begin{eqnarray*}
\left \vert \mbb{E}_A \left[R_{\mathrm{emp}}(\mc{S},A) - R_{\mathrm{emp}}(\mc{S}',A) \right]\right \vert
&\le& \left \vert \frac{1}{n}\sum_{j\ne i^*} \mbb{E}_A\left[\ell(\mc{S},A,z_j) - \ell(\mc{S}',A,z_j)\right]\right \vert \\
&& + \left \vert \frac{1}{n} \mbb{E}_A\left[\ell(\mc{S},A,z_{i^*}) - \ell(\mc{S}',A,z_{i^*}')\right]\right \vert \\
&\le& \frac{1}{n}\sum_{j\ne i^*} \left \vert \mbb{E}_A\left[\ell(\mc{S},A,z_j) - \ell(\mc{S}',A,z_j)\right]\right \vert \\
&& + \frac{1}{n} \mbb{E}_A\left[\left \vert \ell(\mc{S},A,z_{i^*}) - \ell(\mc{S}',A,z_{i^*}')\right \vert \right]\\
&\le& \frac{n-1}{n}\cdot \frac{\kappa}{n} + \frac{\gamma_{\ell}}{n}
\le \frac{\kappa+\gamma_{\ell}}{n}.
\end{eqnarray*}
Finally, we have
\begin{eqnarray*}
 \vert G(\mc{S}) - G(\mc{S}') \vert  & \le & \left \vert \mbb{E}_A\left[R(\mc{S},A) - R(\mc{S}',A)\right]\right \vert
+ \left \vert \mbb{E}_A \left[R_{\mathrm{emp}}(\mc{S},A) - R_{\mathrm{emp}}(\mc{S}',A) \right]\right \vert,
\end{eqnarray*}
so that the result follows.
\end{proof}

For reference, we state the classical McDiarmid's concentration inequality \cite{mcdiarmid89},
which provides the chance of an inequality with respect to a random training set.

\begin{proposition}[McDiarmid's concentration inequality]\label{prop:1}
If $F(\mc{S})=F(z_1,z_2,\ldots,z_n)$ is a function of $n$ i.i.d. random variables that satisfies
$$
 \vert F(\mc{S}) - F(\mc{S}') \vert  \le c
$$
for all $\mc{S}$ and $\mc{S}'$ that differ by one coordinate, then, for any $\epsilon>0$, we have
$$
P\left(F(\mc{S}) \le \mbb{E}_{\mc{S}}[F(\mc{S})] + \epsilon\right) \ge 1 - e^{-\frac{2\epsilon^2}{nc^2}}.
$$
\end{proposition}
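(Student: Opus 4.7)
The plan is to prove McDiarmid's inequality via the standard Doob martingale argument combined with Hoeffding's lemma, so the statement follows as a special case of a bounded-differences concentration result.

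First, I would define the Doob martingale associated with $F$. Let $M_0=\mbb{E}_{\mc{S}}[F(\mc{S})]$ and, for $i=1,\ldots,n$, let
$$
M_i = \mbb{E}[F(z_1,\ldots,z_n)\mid z_1,\ldots,z_i],
$$
so that $M_n = F(\mc{S})$ and the increments $V_i := M_i - M_{i-1}$ form a martingale difference sequence whose sum telescopes to $F(\mc{S}) - \mbb{E}_{\mc{S}}[F(\mc{S})]$.

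Second, I would establish that the conditional range of $V_i$ given the past is bounded by $c$. Fix $z_1,\ldots,z_{i-1}$ and define, for any $u$,
$$
h(u) = \mbb{E}[F(z_1,\ldots,z_{i-1},u,z_{i+1},\ldots,z_n)\mid z_1,\ldots,z_{i-1}].
$$
By the bounded-difference hypothesis $|F(\mc{S})-F(\mc{S}')|\le c$ and the $\mathrm{i.i.d.}$ assumption, the random variable $h(z_i)-h(\tilde z_i)$ has magnitude at most $c$ for any two possible values of the $i$-th coordinate; hence the conditional essential range of $V_i = h(z_i)-\mbb{E}[h(z_i)\mid z_1,\ldots,z_{i-1}]$ has length at most $c$.

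Third, I would apply Hoeffding's lemma to each increment: conditionally on $z_1,\ldots,z_{i-1}$,
$$
\mbb{E}\!\left[e^{\lambda V_i}\mid z_1,\ldots,z_{i-1}\right] \le e^{\lambda^2 c^2 / 8}.
$$
Iterating via the tower property yields $\mbb{E}\!\left[e^{\lambda(F(\mc{S})-\mbb{E}F(\mc{S}))}\right] \le e^{n\lambda^2 c^2/8}$. A Chernoff bound then gives, for every $\lambda>0$,
$$
P\!\left(F(\mc{S})-\mbb{E}_{\mc{S}}[F(\mc{S})] \ge \epsilon\right) \le e^{-\lambda\epsilon + n\lambda^2 c^2/8},
$$
and minimizing over $\lambda$ (optimal choice $\lambda = 4\epsilon/(n c^2)$) produces the upper tail probability $e^{-2\epsilon^2/(nc^2)}$, which is equivalent to the stated lower bound on $P(F(\mc{S})\le \mbb{E}_{\mc{S}}[F(\mc{S})]+\epsilon)$.

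The main obstacle is the second step: carefully justifying that the martingale increment $V_i$ inherits the range bound $c$ from the bounded-differences assumption on $F$. One must be attentive to the fact that $V_i$ is an expectation with respect to the \emph{future} coordinates $z_{i+1},\ldots,z_n$ of a difference of $F$-values that differ only in the $i$-th coordinate; linearity of expectation and the i.i.d. assumption are what let this pointwise bound pass through the conditional expectation. Once this is established, the rest is the routine Chernoff-Hoeffding machinery and does not interact with the hypergraph-specific content of the paper.
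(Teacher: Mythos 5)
Your proof is correct: the Doob martingale decomposition, the bounded conditional range of the increments $V_i$ (which does require the independence of the coordinates so that the pointwise bounded-differences condition passes through the expectation over the future coordinates, exactly as you note), Hoeffding's lemma, and the Chernoff optimization at $\lambda = 4\epsilon/(nc^2)$ together give the stated one-sided tail bound. Note, however, that the paper offers no proof of this proposition at all --- it is stated purely for reference as a classical result with a citation to McDiarmid --- so there is nothing to compare against; what you have written is the standard textbook argument that the cited source contains.
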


We are now in position to present our main result for the generalization gap.

\begin{theorem}[Generalization Gap]\label{thm:2}
Consider a single-layer HCoN model trained on a dataset $\mc{S}$ using the SGD algorithm for $T$ iterations.
The following expected generalization gap holds for all $0<\delta < 1$, with probability at least $1-\delta$,
$$
\mbb{E}_A[R(\mc{S},A) - R_{emp}(\mc{S},A)] \le \frac{\kappa}{n} +
\frac{2\kappa+\gamma_{\ell}}{\sqrt{n}}\cdot\sqrt{\frac{\ln \frac{1}{\delta}}{2}}.
$$
Here, $\kappa$ is given by \eqref{eqn:kappa} and $\gamma_{\ell}$ is the upper bound of $\ell$.
\end{theorem}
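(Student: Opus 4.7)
The plan is to combine the uniform stability result (Theorem \ref{thm:1}), the bounded differences property of $G$ (Lemma \ref{lm:4}), and McDiarmid's concentration inequality (Proposition \ref{prop:1}) applied to the random variable $F(\mc{S}) := G(\mc{S})$.

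First, I would bound the expected generalization gap $\mbb{E}_{\mc{S}}[G(\mc{S})]$ via a standard symmetry argument that exploits the i.i.d.\ sampling of $\mc{S}$. The key identity is that, for each fixed $i$, the pair $(\mc{S}, z_i)$ has the same distribution as $(\mc{S}^{(i)}, z)$, where $z$ is a fresh i.i.d.\ sample from $\mc{D}$ and $\mc{S}^{(i)}$ is obtained from $\mc{S}$ by replacing its $i$-th entry with $z$. Consequently,
\begin{equation*}
\mbb{E}_{\mc{S}}\mbb{E}_A[R_{\mathrm{emp}}(\mc{S},A)]
= \frac{1}{n}\sum_{i=1}^{n}\mbb{E}_{\mc{S},z}\mbb{E}_A[\ell(\mc{S}^{(i)},A,z)],
\end{equation*}
while $\mbb{E}_{\mc{S}}\mbb{E}_A[R(\mc{S},A)] = \mbb{E}_{\mc{S},z}\mbb{E}_A[\ell(\mc{S},A,z)]$. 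Since $\mc{S}$ and $\mc{S}^{(i)}$ differ by a single sample, each summand can be controlled by the uniform stability bound $\kappa/n$ from Theorem \ref{thm:1}, yielding $\mbb{E}_{\mc{S}}[G(\mc{S})] \le \kappa/n$.

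Second, I would view $G(\mc{S})$ as a function of the $n$ i.i.d.\ training samples and verify the bounded-differences condition required by McDiarmid. This is precisely what Lemma \ref{lm:4} establishes: $|G(\mc{S}) - G(\mc{S}')| \le (2\kappa+\gamma_{\ell})/n$ whenever $\mc{S}$ and $\mc{S}'$ differ in a single coordinate. Applying Proposition \ref{prop:1} with $c=(2\kappa+\gamma_{\ell})/n$ gives, for any $\epsilon>0$,
\begin{equation*}
P\!\left(G(\mc{S}) \le \mbb{E}_{\mc{S}}[G(\mc{S})] + \epsilon \right)
\ge 1 - \exp\!\left(-\frac{2\epsilon^2 n}{(2\kappa+\gamma_{\ell})^2}\right).
\end{equation*}

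Third, I would invert the tail bound: setting the right-hand exponential equal to $\delta$ and solving yields $\epsilon = \frac{2\kappa+\gamma_{\ell}}{\sqrt{n}}\sqrt{\ln(1/\delta)/2}$. Combining with the bound $\mbb{E}_{\mc{S}}[G(\mc{S})] \le \kappa/n$ from the first step delivers the stated inequality with probability at least $1-\delta$.

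The main obstacle here is the first step: one must carefully handle the double expectation over $\mc{S}$ and the SGD randomization $A$, and justify that the uniform stability from Theorem \ref{thm:1}, which is stated for arbitrary fixed $\mc{S},\mc{S}'$ differing in one sample, transfers to an averaged bound on $\mbb{E}_{\mc{S}}[G(\mc{S})]$. The remaining steps are essentially a mechanical invocation of Lemma \ref{lm:4} and McDiarmid's inequality, and the final algebra of inverting the Gaussian-type tail is routine.
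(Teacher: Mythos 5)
Your proposal is correct and follows essentially the same route as the paper's proof: the replace-one symmetry argument to show $\mbb{E}_{\mc{S}}[G(\mc{S})]\le\kappa/n$ via Theorem \ref{thm:1}, then McDiarmid's inequality with the bounded-differences constant $c=(2\kappa+\gamma_{\ell})/n$ from Lemma \ref{lm:4}, and finally inverting the tail. The algebra of the tail inversion and the exponent $2\epsilon^2 n/(2\kappa+\gamma_{\ell})^2$ both check out.
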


\begin{proof}
Let $F(\mc{S}) = G(\mc{S})$ be the generalization gap. By Lemma \ref{lm:4} and McDiarmid's concentration inequality,
we have
$$
P\left\{G(\mc{S}) \le \mbb{E}_{\mc{S}}[G(\mc{S})] + \epsilon\right\} \ge 1 - e^{-\frac{2\epsilon^2}{nc^2}},
$$
where $c:=\frac{2\kappa+\gamma_{\ell}}{n}$. Let $\delta=e^{-\frac{2\epsilon^2}{nc^2}}$. Note that
$$
\epsilon = c\sqrt{\frac{n\ln \frac{1}{\delta}}{2}}
= \frac{2\kappa+\gamma_{\ell}}{\sqrt{n}} \cdot \sqrt{\frac{\ln \frac{1}{\delta}}{2}}.
$$
Hence, the following holds with a probability of at least $1-\delta$:
\setlength{\arraycolsep}{0.1em}%
\begin{eqnarray*}
\mbb{E}_A[R(\mc{S}, A) - R_{\mathrm{emp}}(\mc{S},A)]
&\le& \mbb{E}_{\mc{S}}[G(\mc{S})]
+ \frac{2\kappa+\gamma_{\ell}}{\sqrt{n}}\cdot\sqrt{\frac{\ln \frac{1}{\delta}}{2}}.
\end{eqnarray*}
\setlength{\arraycolsep}{5pt}%
It remains to be shown that $\mbb{E}_{\mc{S}}[G(\mc{S})] \le \kappa/n$. Note that
\setlength{\arraycolsep}{0.1em}%
\begin{eqnarray*}
\mbb{E}_{\mc{S}}[G(\mc{S})]
&=& \mbb{E}_{\mc{S}}[ \mbb{E}_A[R(\mc{S}, A) - R_{\mathrm{emp}}(\mc{S},A)]]\\
&=& \mbb{E}_{\mc{S}}[ \mbb{E}_A[\mbb{E}_z[\ell(\mc{S}, A, z)]]] - \frac{1}{n}\sum_{j=1}^{n} \mbb{E}_{\mc{S}}[ \mbb{E}_A[\ell(\mc{S},A,z_j)]]\\
&=& \mbb{E}_{\mc{S},z}[ \mbb{E}_A[\ell(\mc{S}, A, z)]] - \mbb{E}_{\mc{S}}[ \mbb{E}_A[\ell(\mc{S},A,z_i)]]\\
&=& \mbb{E}_{\mc{S},z_i'}[ \mbb{E}_A[\ell(\mc{S}, A, z_i')]] - \mbb{E}_{\mc{S},z_i'}[ \mbb{E}_A[\ell(\mc{S}',A,z_i')]]\\
&\le& \mbb{E}_{\mc{S},z_i'}[  \vert \mbb{E}_A[\ell(\mc{S}, A, z_i') - \ell(\mc{S}',A,z_i')] \vert ]
\le \mbb{E}_{\mc{S},z_i'}\left[\frac{\kappa}{n}\right] = \frac{\kappa}{n}.
\end{eqnarray*}
\setlength{\arraycolsep}{5pt}%
The last inequality is due to Theorem \ref{thm:1}.
\end{proof}

Theorem \ref{thm:2} states that as $n\to\infty$, the gap converges to zero at the rate of $O(1/{\sqrt{n}})$,
provided that $\kappa$ in \eqref{eqn:kappa} does not grow with $n$ and the network size ($M$ and $N$).
It boils down to the requirement that the $g_{\max}$ given in \eqref{eqn:g_max} does not grow with $n$, $M$ and $N$.
In general, $g_{\max}$ can grow with the size of the network.
For example, if the entries of $X_V$ and $X_E$ are uniformly distributed on the interval $[0,1]$,
then $\|X_V\|_2 = O(\sqrt{NF_V})$ and $\|X_E\|_2 = O(\sqrt{MF_E})$.
We therefore normalize each column of $X_V$ to a unit vector. Thus, we have
$\|X_{V}\|_2 \le \|X_V\|_F = \sqrt{F_V}$, which is a constant independent of the graph size.
Here, $\|\cdot\|_F$ denotes the Frobenius norm.
Likewise, we also normalize $X_E$ so that $\|X_{E}\|_2 \le \sqrt{F_E}$.
Another factor that may affect the growth of $g_{\max}$ is the incidence matrix $H$.
When $H$ is normalized to $\tilde{H}$, it can be shown that the dominant singular value
$\mu(\tilde{H})$ is bounded above by $1$ (see \cite{wu-yan-ng22}).
In contrast, we have $\mu(H)=O(\sqrt{NM})$. With both kinds of normalization in place, we have
$$
g_{\max}\le \sqrt{F_{V}+F_{E}} = O(1),
$$
a constant independent of the graph size. Likewise, $g_{\max}$ in \eqref{eqn:g_max_weighted} is bounded
by $\sqrt{\alpha^2 F_{V} + (1-\alpha)^2F_{E}} = O(1)$.

It is worthwhile to compare our results with some related works. In the study of stable algorithms in \cite{bousquet-elisseeff02},
the learnt model $\theta$ is assumed to be a \emph{global optimizer} of the loss function, which is assumed to be \emph{convex}.
There are no SGD iterations. As such, the corresponding constant $\kappa$ is independent of $T$.
However, the assumption of global optimality is impractical. The work in \cite{hardt-recht-singer15} considers
regression models with $\theta_1,\ldots,\theta_T$ generated with SGD. The constant $\kappa$ is shown to be $O(T)$.
However, there are also some rather strong convexity assumptions in the model that
are not applicable to neural networks with nonlinearities. The bound in \cite{verma-zhang19} for GCNs has the same kind of
exponential dependence on $T$ as ours. This is due to the lack of convexity in the model. Moreover, SGD does not
guarantee a monotonic reduction of the loss function even when the learning rate is coupled with a line search.
Thus the situation encountered in \cite{verma-zhang19} and in this paper is more sophisticated but more practical.
However, we manage to establish the convergence with respect to the size $n$ of the training set.

The result indicates the consistency between training and testing errors, which is important for the
model to be useful. Of course, it relies on the well-known fundamental assumption that the training and testing sets
are drawn from the same distribution. In Section \ref{sec:experiments}, we will also numerically study
the generalization gap with respect to different parameters.

In the paper by \cite{wu-ng22} and \cite{wu-yan-ng22}, vertex and hyperedge classification is studied.
Experimental results on several benchmark datasets have shown that the performance of the hypergraph collaborative network is
better than that of the baseline methods. Our theoretical results for the stability and
generalization of hypergraph collaborative networks can further confirm their usefulness.

Finally, we would like to remark that our theoretical analysis is also valid for hypergraph convolution networks (HCNs) in
\eqref{vertexiterative}, where propagation of embedding is done on vertices only. The HCoN model reduces
to an HCN when the hyperedge feature matrix $X_{E}$ is set to the zero matrix and
the hyperedge feature dimension $F_{E}$ is set to zero.

\begin{corollary}[Generalization Gap]\label{cor:1}
Consider a single layer HCN model trained on a dataset $\mc{S}$ using the SGD algorithm for $T$ iterations.
The following expected generalization gap holds for all $0<\delta < 1$, with probability at least $1-\delta$,
$$
\mbb{E}_A[R(\mc{S},A) - R_{emp}(\mc{S},A)] \le \frac{\kappa}{n} +
\frac{2\kappa+\gamma_{\ell}}{\sqrt{n}}\cdot\sqrt{\frac{\ln \frac{1}{\delta}}{2}}.
$$
Here, $\kappa$ is given by \eqref{eqn:kappa} and $\gamma_{\ell}$ is the upper bound of $\ell$.
\end{corollary}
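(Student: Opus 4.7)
The plan is to deduce Corollary \ref{cor:1} as a direct specialization of Theorem \ref{thm:2} by observing that the HCN in \eqref{vertexiterative} is precisely the HCoN vertex encoder \eqref{eqn:f} with the hyperedge contribution switched off. First I would verify the algebraic reduction: since $\tilde{H}\tilde{H}^{\top} = D_{V}^{-\frac{1}{2}} H D_{E}^{-\frac{1}{2}} D_{E}^{-\frac{1}{2}} H^{\top} D_{V}^{-\frac{1}{2}} = D_{V}^{-\frac{1}{2}} H D_{E}^{-1} H^{\top} D_{V}^{-\frac{1}{2}}$, setting $F_{E}=0$ (so that $X_{E}$, $Q_{E}$ disappear and the second summand in \eqref{eqn:f} vanishes) collapses \eqref{eqn:f} to exactly \eqref{vertexiterative}. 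Hence an HCN trained by SGD is literally a single-layer HCoN trained by SGD whose parameter vector is the restriction $\theta=Q_{V}$ of the full HCoN parameter $\theta=(Q_{V},Q_{E})$.

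Next I would trace through Section 3.5 and Section 4.1 under this restriction to make sure every constant remains finite and the chain of lemmas still applies. The quantity $B_{v}=\mb{e}(v)^{\top}\tilde{H} D_{E}^{-\frac{1}{2}} X_{E}$ is simply the empty block, so the bound \eqref{eqn:g_max_bound} continues to hold with
\begin{equation*}
g_{\max} \;=\; \mu(\tilde{H})^{2}\,\|X_{V}\|_{2},
\end{equation*}
which is the $X_{E}=0$ instance of \eqref{eqn:g_max}. With this $g_{\max}$ substituted, Lemmas \ref{lm:1}--\ref{lm:3} and Theorem \ref{thm:1} go through verbatim, yielding the same uniform-stability constant $\kappa$ as in \eqref{eqn:kappa}. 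The loss-function assumptions \eqref{eqn:loss}--\eqref{eqn:loss_bound} are intrinsic to $\ell$ and are unaffected by the choice of network, so $\gamma_{\ell}$ and $\alpha_{\ell}$, $\nu_{\ell}$ carry over.

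Finally, I would invoke Lemma \ref{lm:4} and McDiarmid's inequality (Proposition \ref{prop:1}) exactly as in the proof of Theorem \ref{thm:2}; no step of that argument uses the presence of the hyperedge-feature term. The conclusion is the claimed bound
\begin{equation*}
\mbb{E}_{A}[R(\mc{S},A)-R_{\mathrm{emp}}(\mc{S},A)] \;\le\; \frac{\kappa}{n} + \frac{2\kappa+\gamma_{\ell}}{\sqrt{n}}\sqrt{\frac{\ln\frac{1}{\delta}}{2}}
\end{equation*}
with probability at least $1-\delta$. There is no substantive obstacle here; the only point requiring any care is making explicit that ``$F_{E}=0$'' is a legitimate specialization of the HCoN analysis (rather than just ``$X_{E}=0$'' with a vestigial parameter $Q_{E}$ still being updated by SGD), so that the stability constant one reads off is genuinely the HCN constant and not an artifact of carrying redundant parameters. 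Once this bookkeeping is stated, the corollary follows immediately from Theorem \ref{thm:2}.
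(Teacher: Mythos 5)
Your proposal is correct and follows exactly the paper's route: the corollary is obtained by specializing Theorem \ref{thm:2} to the case $X_{E}=0$, $F_{E}=0$, under which the vertex encoder \eqref{eqn:f} collapses to the HCN \eqref{vertexiterative} and all the constants (in particular $g_{\max}$ and hence $\kappa$) remain valid. Your extra care about distinguishing ``$F_{E}=0$'' from ``$X_{E}=0$ with a vestigial $Q_{E}$'' is a welcome clarification of a point the paper states only in passing, but it does not change the argument.
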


\section{Experiments}\label{sec:experiments}

The purpose of this section is to numerically study the behavior of the generalization gap of HCoNs.
We refer the reader to \cite{wu-yan-ng22} for the accuracy of the model.

\subsection{Datasets}
We use the widely used benchmark datasets of Citeseer \cite{citeseer},
Cora \cite{cora} and PubMed \cite{pubmed} for evaluations.
The three networks consist of $1498$, $16313$, and $3840$ vertices, respectively.

Citeseer and PubMed are cocitation datasets. In the hypergraph, each vertex represents a document.
The set of citations of a document form a hyperedge. The vertex features for Citeseer and PubMed
are the bag-of-words vector representations and the term frequency-inverse document
frequency (TF-IDF), respectively.
Cora is a coauthorship dataset. Each vertex represents a document
and each hyperedge represents an author connecting to documents of the author.
The vertex features are the TF-IDF vectors of the documents. A more detailed description of the
feature vector generation process can be found in \cite{wu-yan-ng22}. In each of the three networks,
$30\%$ of the vertices are used as the test set; $30\%$--$70\%$ of the vertices are used as the training set.

\subsection{Effect of the Weight}
Fig. \ref{fig:trainsize} shows the generalization gap as a function of the size of the training set.
The weight $\alpha$ in \eqref{eqn:f_weighted} and \eqref{eqn:g_weighted} is varied to see its effects.
The gap values are the average of $10$ different randomizations to serve as a proxy for the expectation $\mathbb{E}_{A}[\cdot]$.
As the size of the training set increases, the generalization gap decreases.
This is predicted with our theory that the gap decreases as the
size of the training set increases. For a fixed size of the training set,
the gap is smaller as $\alpha$ increases; this indicates that for these datasets,
the vertex features are more important for prediction than the hyperedge features.
\begin{figure*}[!t]
\centering
\subfloat[Citeseer dataset]{\includegraphics[width=2.3in]{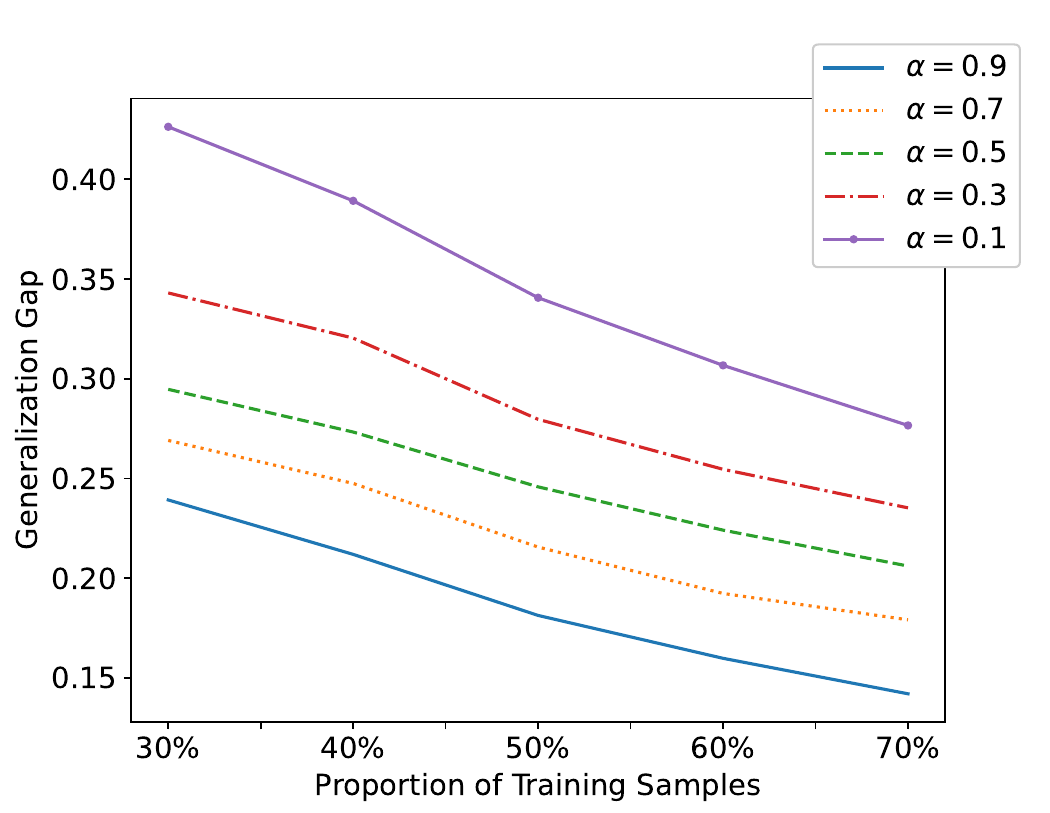}%
\label{fig:trainsize_citeseer}}\\
\subfloat[Cora dataset]{\includegraphics[width=2.3in]{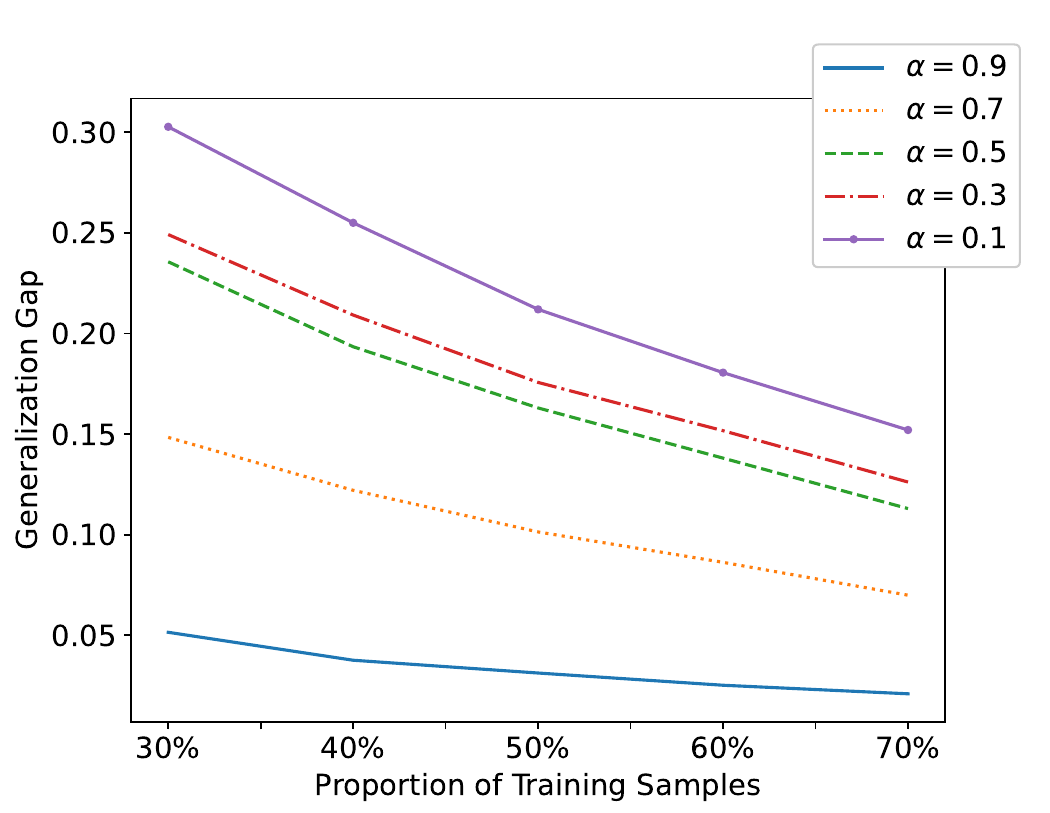}%
\label{fig:trainsize_cora}}\\
\subfloat[PubMed dataset]{\includegraphics[width=2.3in]{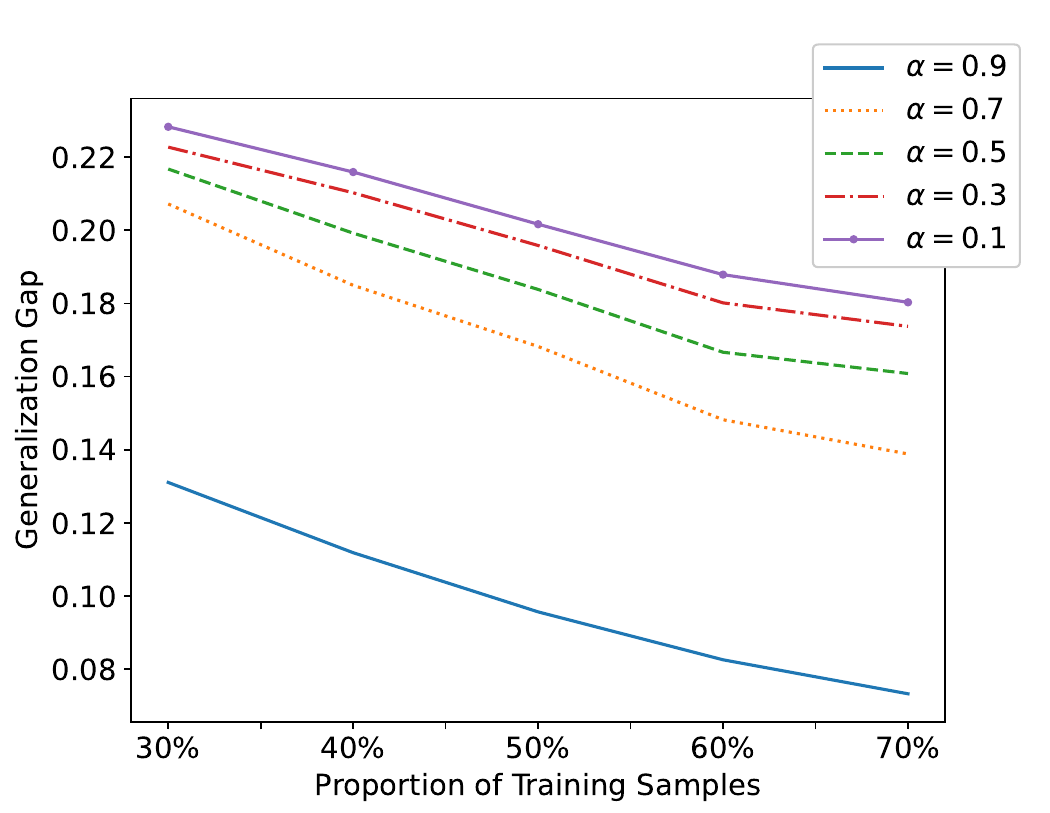}%
\label{fig:trainsize_pubmed}}
\caption{Convergence of generalization gap as the training size $n$ increases.}
\label{fig:trainsize}
\end{figure*}

\subsection{Effect of the Learning Rate}
Fig. \ref{fig:lr} shows the generalization gap as a function of the size of the training set for different learning rates.
The gap values are the average of $10$ different randomizations.
The generalization gap reduces as the size of the training set $n$ increases.
Moreover, the gap decreases with the learning rate for each fixed $n$. Such phenomena are consistent with
the bound devised in Theorem \ref{thm:2}.
\begin{figure*}[!t]
\centering
\subfloat[Citeseer dataset]{\includegraphics[width=2.3in]{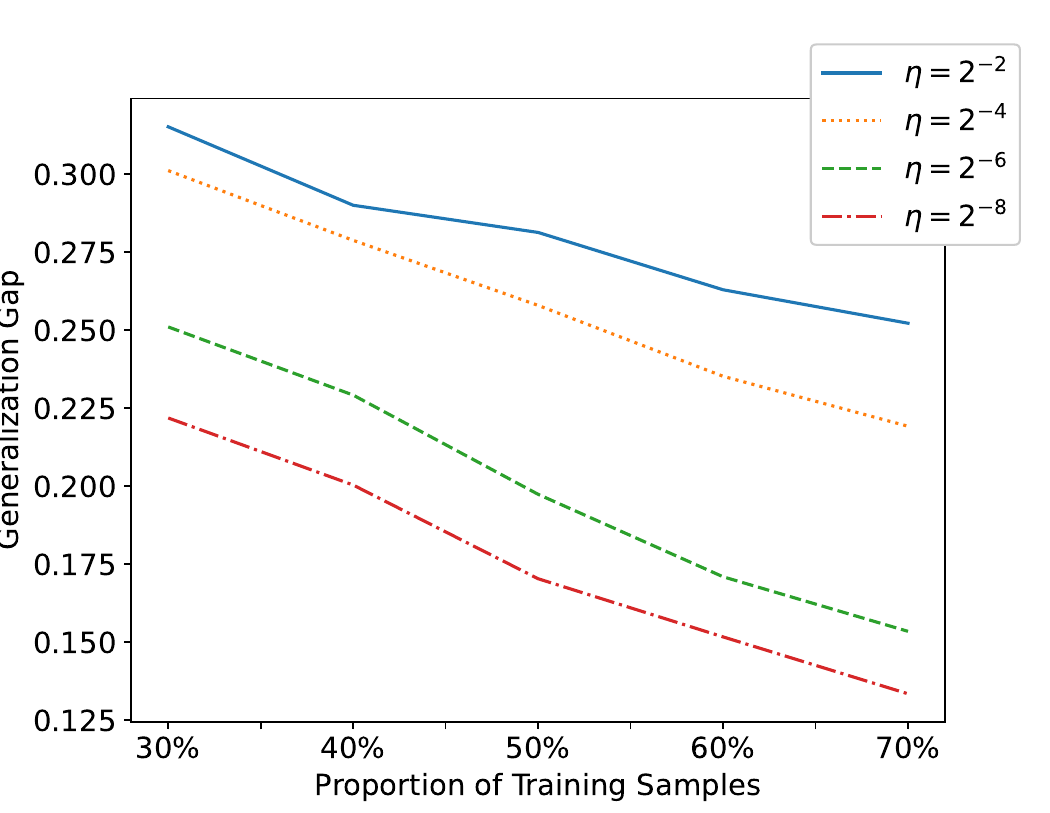}%
\label{fig:lr_citeseer}}\\
\subfloat[Cora dataset]{\includegraphics[width=2.3in]{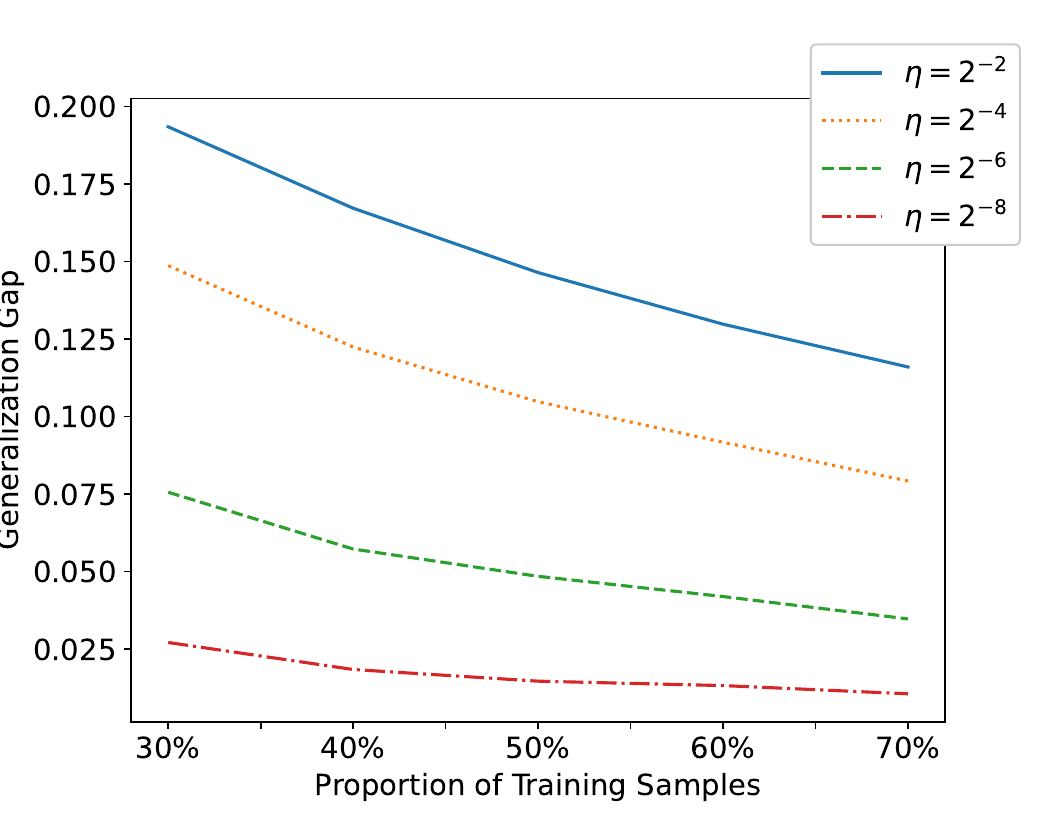}%
\label{fig:lr_cora}}\\
\subfloat[PubMed dataset]{\includegraphics[width=2.3in]{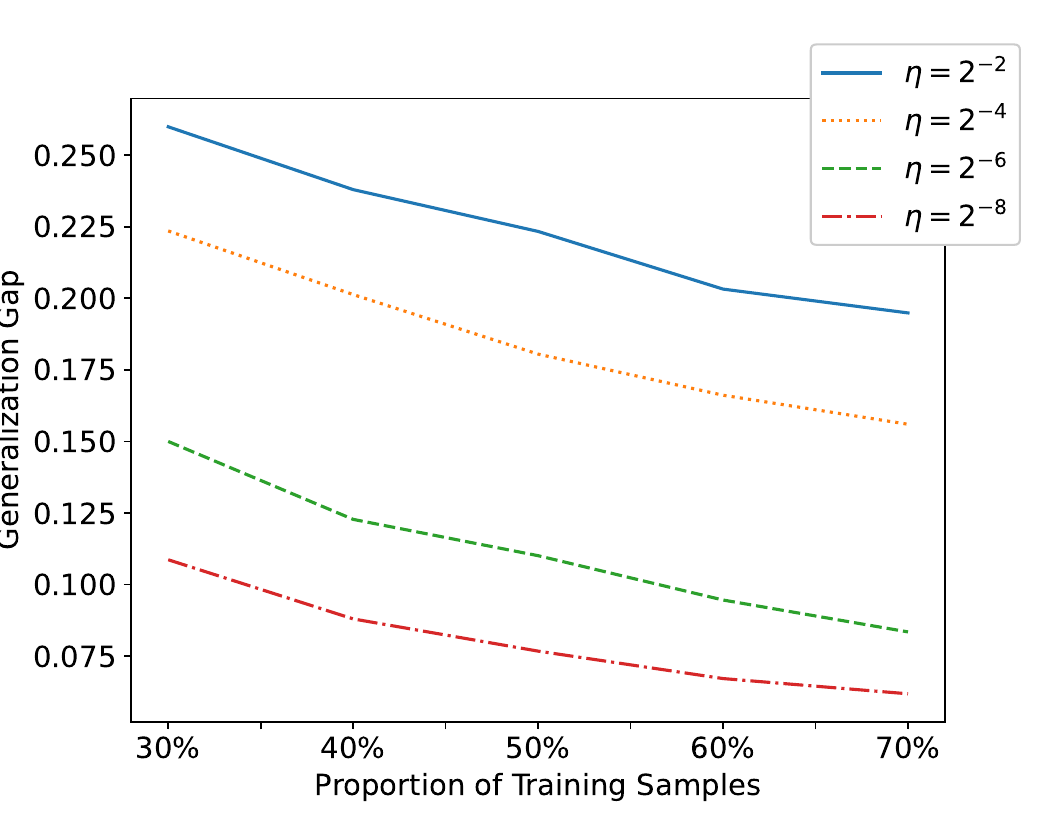}%
\label{fig:lr_pubmed}}
\caption{Convergence of generalization gap as the training size $n$ increases.}
\label{fig:lr}
\end{figure*}

To illustrate the convergence of the training process, we show the training and test loss function values at each epoch in
Fig. \ref{fig:train_test_loss}. To avoid overwhelming with too many figures, we show only the result
for $70\%$ training sample, $\alpha=0.9$ and $\eta=0.01$.
The loss function values are the average of $10$ different randomizations.
The graph shows that 1) the training loss function value is monotonically
decreasing, and hence, the network fits better to the training data; 2) the test loss function value is also monotonically decreasing,
and hence, the generalization error is improving. The starting training and test loss function values are similar because
the network parameters are initialized randomly.
The difference between the training and the test loss function values at the final epoch
constitutes the generalization gap reported in Fig. \ref{fig:trainsize} and Fig. \ref{fig:lr}.
Our theory predicts that the gap at a fixed $T$ decreases as the size $n$ of the training set increases;
see Fig. \ref{fig:trainsize} and Fig. \ref{fig:lr} for the convergence of the gap as a function of $n$.
\begin{figure*}[!t]
\centering
\subfloat[Citeseer dataset]{\includegraphics[width=2.3in]{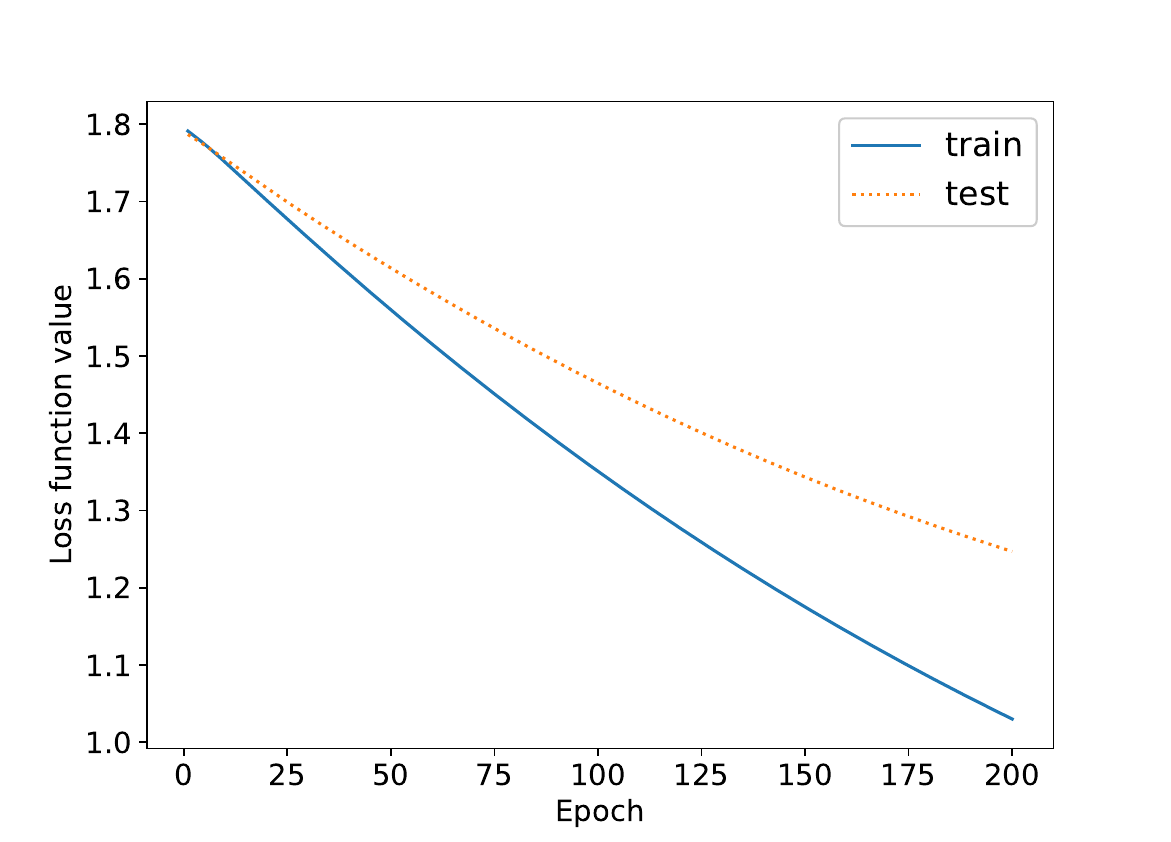}%
\label{fig:train_test_citeseer}}\\
\subfloat[Cora dataset]{\includegraphics[width=2.3in]{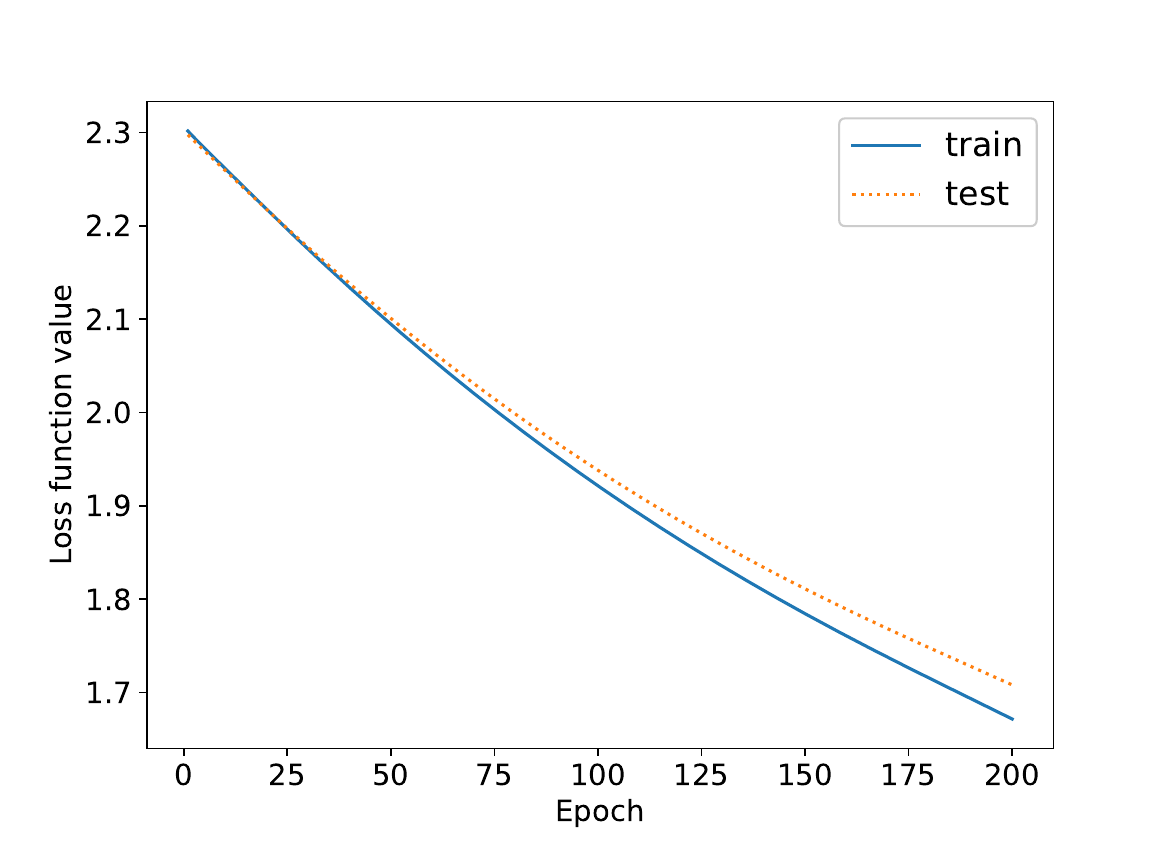}%
\label{fig:train_test_cora}}\\
\subfloat[PubMed dataset]{\includegraphics[width=2.3in]{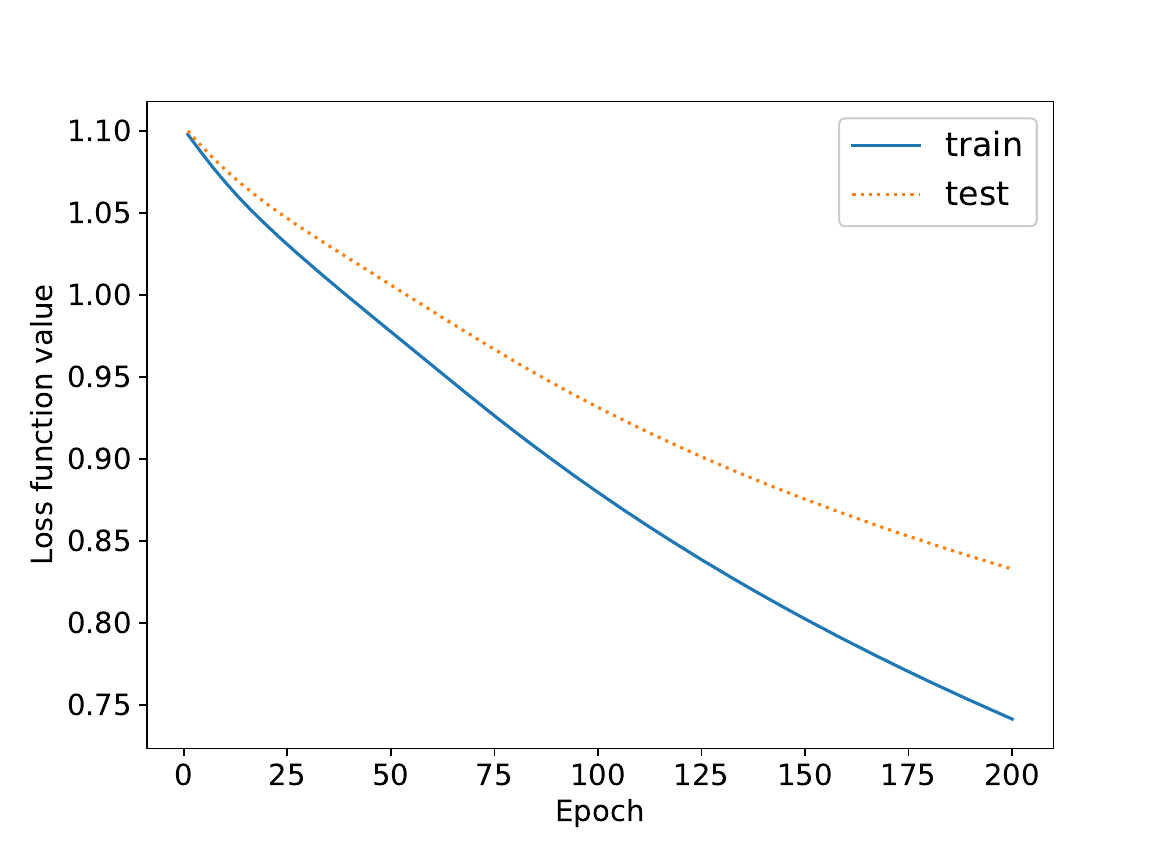}%
\label{fig:train_test_pubmed}}
\caption{Convergence of the training process as the number of iterations $T$ increases.}
\label{fig:train_test_loss}
\end{figure*}

\subsection{Effect of Incidence Matrix Normalization}
In the experiment, we demonstrate the effect of normalization of the incidence matrix $H$.
The normalized model is given in \eqref{eqn:f_weighted} and \eqref{eqn:g_weighted}.
The unnormalized model is obtained by replacing $\tilde{H}$ in \eqref{eqn:f_weighted} and \eqref{eqn:g_weighted} with $H$.
Fig. \ref{fig:epoch} shows the generalization gap as the SGD progresses.
The size of the training set is fixed to $70\%$.
The $\alpha$ is fixed to $0.9$ because it yields the lowest gap and highest accuracy (see \cite{wu-yan-ng22}).
Each epoch represents a cycle of iterations over the whole training set.
The gap values are obtained from a single randomization.
First, the results show that when $H$ is normalized to $\tilde{H}$,
the gap reaches a smaller level and converges faster as the number of iterations increases.
This is predicted with our theoretical bound of the generalization gap since
$\mu(\tilde{H})\le1$ whereas $\mu(H) = O(\sqrt{NM})$.
The theory thus explains why normalization is important.
Second, regarding the results with normalized incidence matrices,
although the theoretical bound grows with $T$ (the number of iterations),
the gap stabilizes quickly in practice.
This is because the bound represents the worst-case scenario, which is pessimistic.
\begin{figure*}[!t]
\centering
\subfloat[Citeseer dataset]{\includegraphics[width=2.3in]{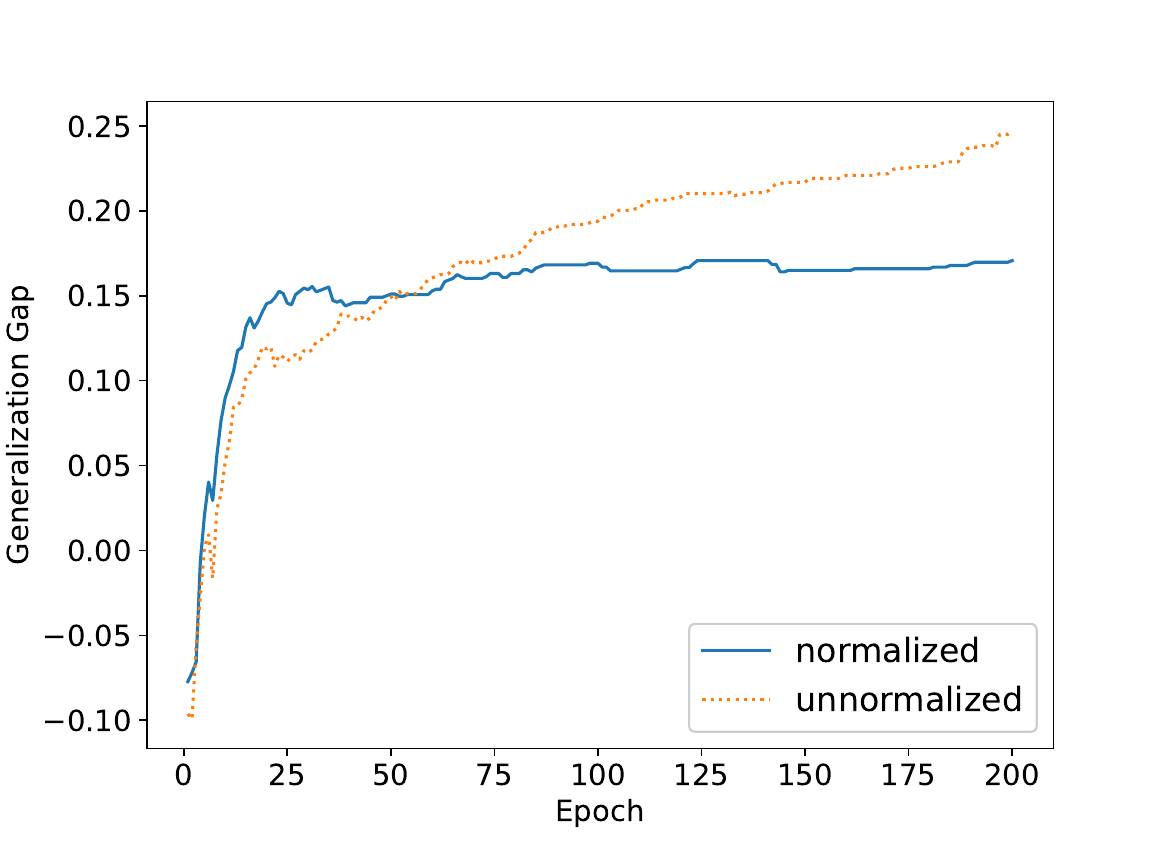}%
\label{fig:epoch_citeseer}}\\
\subfloat[Cora dataset]{\includegraphics[width=2.3in]{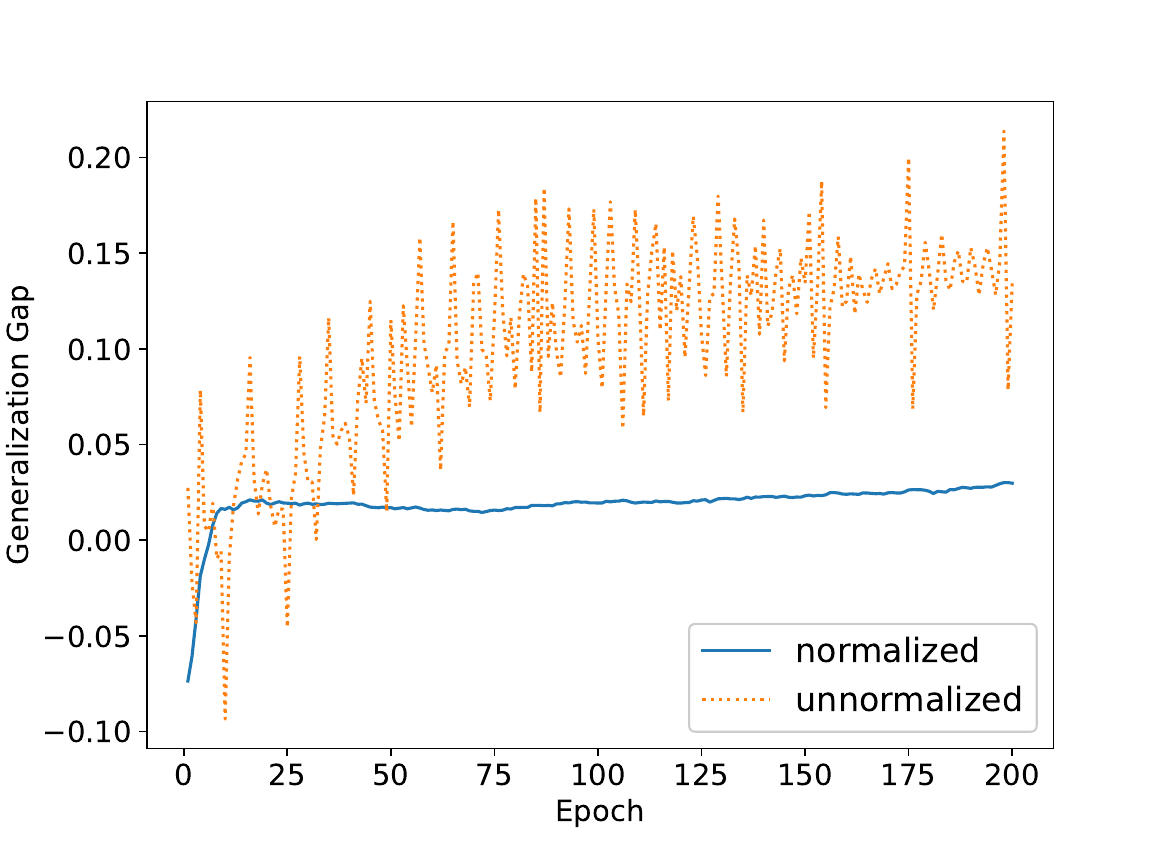}%
\label{fig:epoch_cora}}\\
\subfloat[PubMed dataset]{\includegraphics[width=2.3in]{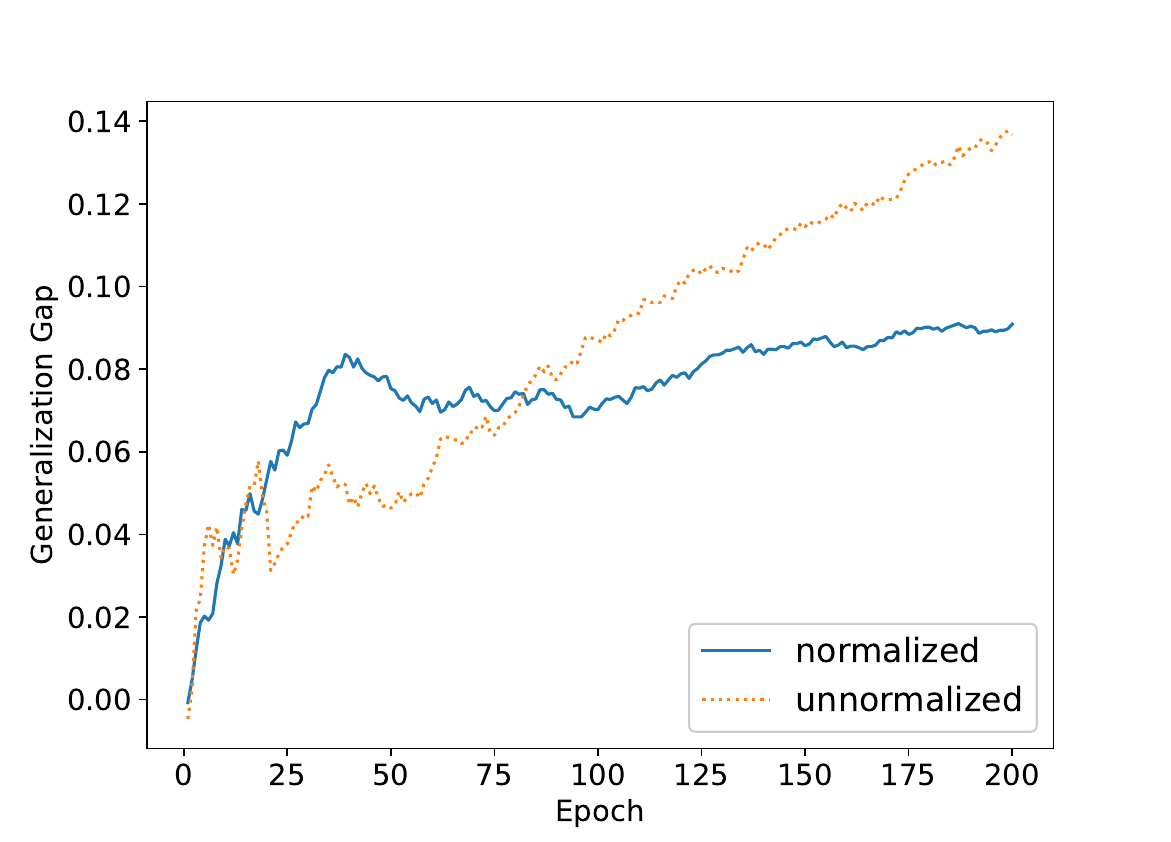}%
\label{fig:epoch_pubmed}}
\caption{Convergence of the generalization gap as the number of iterations $T$ increases.}
\label{fig:epoch}
\end{figure*}

\section{Conclusion}

In this paper, we have established uniform stability and generalization guarantee for
single-layer hypergraph collaborative networks. The results show the importance of the
the normalization of the vertex and hyperedge features and normalization of the hypergraph
incidence matrix $H$. With these normalizations, we have a bound of the generalization gap independent of the
graph size, and therefore, the generalization gap converges to zero. Some numerical experiments
are presented to examine the convergence of the generalization gap in practice.

Several future research directions will be considered. 1) Extend the analysis to multilayer HCoNs,
which are more useful in practice. The main challenge is that the gradients of the network become highly nonlinear
and are more difficult to estimate. 2) Consider more general first-order stochastic optimization algorithms to
include other commonly used algorithms such as SGD with momentum and ADAM. The problem is to devise estimates
that can precisely reflect the potential acceleration delivered by these algorithms and study how the acceleration
affects the stability and generalization gap. 3) While the present result has an advantage in that it does not assume any
specific data distribution, it would also be useful to analyze the generalization gap in the presence of a data distribution.
The idea is to improve the estimation of the expectations $\mathbb{E}_{z\sim \mc{D}}$ and $\mathbb{E}_{\mc{S}}$ in
Lemma \ref{lm:4}, Proposition \ref{prop:1}, and Theorem \ref{thm:2}. Specifically, determine the distribution of the
gap from the assumed distribution of data, and the devise a special case of McDiarmid’s concentration inequality
with an improved bound.

\backmatter

\section*{Acknowledgments}

We would like to thank the editors and reviewers for their comments and suggestions, which helped to improve
the quality of this paper to a great deal.

Ng is supported in part by Hong Kong Research Grant Council GRF
12300218, 12300519, 17201020, 17300021, CRF C1013-21GF, C7004-21GF and Joint NSFC-RGC N-HKU76921.
Wu is supported by Fundamental Research Funds for the Central Universities under Grant 21622326,
National Natural Science Foundation of China (No. 62206111) and China Postdoctoral Science Foundation (No. 2022M721343).

\bibliographystyle{sn-mathphys}
\bibliography{hcon_analysis_mir}

\end{document}